\documentclass[pmlr]{jmlr} % This is the PMLR single-column format

\usepackage[T1]{fontenc}
\usepackage[utf8]{inputenc}
\usepackage{amsmath}
\usepackage{amssymb}
\usepackage{bm}
\usepackage{booktabs}
\usepackage{algorithm2e}
\usepackage{url}
\usepackage{graphicx}
\graphicspath{{figures/}}
\usepackage{pgfplots}
\pgfplotsset{compat=1.18}
\usepgfplotslibrary{fillbetween}
\usetikzlibrary{backgrounds}
\usetikzlibrary{fit}
\usepackage[capitalise,nameinlink]{cleveref}

\usepackage[disable]{todonotes}

\pgfplotsset{compat=1.17}
\usetikzlibrary{intersections}
\usetikzlibrary{positioning}
\usepgfplotslibrary{fillbetween}
\usepgfplotslibrary{groupplots}
\usetikzlibrary{shapes.geometric,backgrounds}

\pgfdeclarelayer{background}    % declare background layer
\pgfsetlayers{background,main}  % set the order of the layers (main is the standard layer)
\definecolor{beige}{RGB}{245, 245, 220}

\definecolor{darkgrey}{RGB}{75, 75, 75}
\definecolor{lightgrey}{RGB}{250, 250, 250}

\usetikzlibrary{calc, arrows, fit, positioning, patterns, 
decorations.pathreplacing, shapes}
\tikzstyle{dash} = [dashed, -latex,>=latex]
\tikzstyle{line} = [draw, -latex,>=latex]
\tikzstyle{smallbox} = [draw, minimum size=5.0mm]
\tikzstyle{box} = [draw, minimum size=7.0mm]
\tikzstyle{bigbox} = [draw, minimum size=10.0mm]
\tikzstyle{blackbox} = [draw, minimum size=2.0mm, fill=black]
\tikzstyle{switch} = [trapezium, trapezium angle=120, draw, rotate=90,  
inner ysep=5pt, outer sep=5pt,
minimum height=7mm, minimum width=7mm]
\tikzstyle{roundbox} = [draw, circle, inner sep=0pt, minimum size=3mm]
\tikzstyle{clamped} = [draw, fill=black, minimum size=0.15cm]
\tikzstyle{msgcircle} = [shape=circle, draw, inner sep=0pt, minimum 
size=4mm, fill=white, font=\scriptsize]
\tikzstyle{darkmsgcircle} = [shape=circle, draw, inner sep=0pt, minimum 
size=4mm, fill=darkgrey, text=white, font=\scriptsize]
\tikzstyle{msgdoublecircle} = [shape=circle, double, double 
distance=1.5pt, draw, inner sep=0pt, minimum size=5mm, fill=white]
\tikzstyle{darkmsgdoublecircle} = [shape=circle, double, double 
distance=1.5pt, draw, inner sep=0pt, minimum size=5mm, fill=darkgrey, 
text=white, font=\bfseries]
\newcommand{\msg}[6]{
      \ifthenelse{\isin{#1}{left} \AND \isin{#2}{down}}{
            \coordinate (anchor) at ($({#3})!{#5}!({#4})$);
            \node[xshift=-6.0mm] at (anchor) {#6};
            \node[xshift=-1.0mm] at (anchor) {$\downarrow$};
      }{}
      \ifthenelse{\isin{#1}{right} \AND \isin{#2}{down}}{
            \coordinate (anchor) at ($({#3})!{#5}!({#4})$);
            \node[xshift=6.0mm] at (anchor) {#6};
            \node[xshift=1.0mm] at (anchor) {$\downarrow$};
      }{}

      \ifthenelse{\isin{#1}{down} \AND \isin{#2}{right}}{
            \coordinate (anchor) at ($({#3})!{#5}!({#4})$);
            \node[ yshift=-4.0mm] at (anchor) {#6};
            \node[yshift=-1.0mm] at (anchor) {$\rightarrow$};
      }{}
      \ifthenelse{\isin{#1}{up} \AND \isin{#2}{right}}{
            \coordinate (anchor) at ($({#3})!{#5}!({#4})$);
            \node[ yshift=4.0mm] at (anchor) {#6};
            \node[yshift=1.0mm] at (anchor) {$\rightarrow$};
      }{}

      \ifthenelse{\isin{#1}{down} \AND \isin{#2}{left}}{
            \coordinate (anchor) at ($({#3})!{#5}!({#4})$);
            \node[ yshift=-4.0mm] at (anchor) {#6};
            \node[yshift=-1.0mm] at (anchor) {$\leftarrow$};
      }{}
      \ifthenelse{\isin{#1}{up} \AND \isin{#2}{left}}{
            \coordinate (anchor) at ($({#3})!{#5}!({#4})$);
            \node[ yshift=4.0mm] at (anchor) {#6};
            \node[yshift=1.0mm] at (anchor) {$\leftarrow$};
      }{}

      \ifthenelse{\isin{#1}{left} \AND \isin{#2}{up}}{
            \coordinate (anchor) at ($({#3})!{#5}!({#4})$);
            \node[ xshift=-6.0mm] at (anchor) {#6};
            \node[xshift=-1.0mm] at (anchor) {$\uparrow$};
      }{}
      \ifthenelse{\isin{#1}{right} \AND \isin{#2}{up}}{
            \coordinate (anchor) at ($({#3})!{#5}!({#4})$);
            \node[ xshift=6.0mm] at (anchor) {#6};
            \node[xshift=1.0mm] at (anchor) {$\uparrow$};
      }{}
}

\newcommand{\msgcircle}[6]{
      \ifthenelse{\isin{#1}{left} \AND \isin{#2}{down}}{
            \coordinate (anchor) at ($({#3})!{#5}!({#4})$);
            \node[msgcircle,xshift=-5.0mm] at (anchor) {#6};
            \node[xshift=-1.5mm] at (anchor) {$\downarrow$};
      }{}
      \ifthenelse{\isin{#1}{right} \AND \isin{#2}{down}}{
            \coordinate (anchor) at ($({#3})!{#5}!({#4})$);
            \node[msgcircle,xshift=5.0mm] at (anchor) {#6};
            \node[xshift=1.5mm] at (anchor) {$\downarrow$};
      }{}

      \ifthenelse{\isin{#1}{down} \AND \isin{#2}{right}}{
            \coordinate (anchor) at ($({#3})!{#5}!({#4})$);
            \node[msgcircle, yshift=-5.0mm] at (anchor) {#6};
            \node[yshift=-2.0mm] at (anchor) {$\rightarrow$};
      }{}
      \ifthenelse{\isin{#1}{up} \AND \isin{#2}{right}}{
            \coordinate (anchor) at ($({#3})!{#5}!({#4})$);
            \node[msgcircle, yshift=5.0mm] at (anchor) {#6};
            \node[yshift=2.0mm] at (anchor) {$\rightarrow$};
      }{}

      \ifthenelse{\isin{#1}{down} \AND \isin{#2}{left}}{
            \coordinate (anchor) at ($({#3})!{#5}!({#4})$);
            \node[msgcircle, yshift=-5.0mm] at (anchor) {#6};
            \node[yshift=-2.0mm] at (anchor) {$\leftarrow$};
      }{}
      \ifthenelse{\isin{#1}{up} \AND \isin{#2}{left}}{
            \coordinate (anchor) at ($({#3})!{#5}!({#4})$);
            \node[msgcircle, yshift=5.0mm] at (anchor) {#6};
            \node[yshift=2.0mm] at (anchor) {$\leftarrow$};
      }{}

      \ifthenelse{\isin{#1}{left} \AND \isin{#2}{up}}{
            \coordinate (anchor) at ($({#3})!{#5}!({#4})$);
            \node[msgcircle, xshift=-5.0mm] at (anchor) {#6};
            \node[xshift=-1.5mm] at (anchor) {$\uparrow$};
      }{}
      \ifthenelse{\isin{#1}{right} \AND \isin{#2}{up}}{
            \coordinate (anchor) at ($({#3})!{#5}!({#4})$);
            \node[msgcircle, xshift=5.0mm] at (anchor) {#6};
            \node[xshift=1.5mm] at (anchor) {$\uparrow$};
      }{}
}

\newcommand{\darkmsg}[6]{
      \ifthenelse{\isin{#1}{left} \AND \isin{#2}{down}}{
            \coordinate (anchor) at ($({#3})!{#5}!({#4})$);
            \node[darkmsgcircle, xshift=-5mm] at (anchor) {#6};
            \node[xshift=-1.5mm] at (anchor) {$\downarrow$};
      }{}
      \ifthenelse{\isin{#1}{right} \AND \isin{#2}{down}}{
            \coordinate (anchor) at ($({#3})!{#5}!({#4})$);
            \node[darkmsgcircle, xshift=5mm] at (anchor) {#6};
            \node[xshift=1.5mm] at (anchor) {$\downarrow$};
      }{}

      \ifthenelse{\isin{#1}{down} \AND \isin{#2}{right}}{
            \coordinate (anchor) at ($({#3})!{#5}!({#4})$);
            \node[darkmsgcircle, yshift=-5.0mm] at (anchor) {#6};
            \node[yshift=-2.0mm] at (anchor) {$\rightarrow$};
      }{}
      \ifthenelse{\isin{#1}{up} \AND \isin{#2}{right}}{
            \coordinate (anchor) at ($({#3})!{#5}!({#4})$);
            \node[darkmsgcircle, yshift=5.0mm] at (anchor) {#6};
            \node[yshift=2.0mm] at (anchor) {$\rightarrow$};
      }{}

      \ifthenelse{\isin{#1}{down} \AND \isin{#2}{left}}{
            \coordinate (anchor) at ($({#3})!{#5}!({#4})$);
            \node[darkmsgcircle, yshift=-5.0mm] at (anchor) {#6};
            \node[yshift=-2.0mm] at (anchor) {$\leftarrow$};
      }{}
      \ifthenelse{\isin{#1}{up} \AND \isin{#2}{left}}{
            \coordinate (anchor) at ($({#3})!{#5}!({#4})$);
            \node[darkmsgcircle, yshift=5.0mm] at (anchor) {#6};
            \node[yshift=2.0mm] at (anchor) {$\leftarrow$};
      }{}

      \ifthenelse{\isin{#1}{left} \AND \isin{#2}{up}}{
            \coordinate (anchor) at ($({#3})!{#5}!({#4})$);
            \node[darkmsgcircle, xshift=-5.0mm] at (anchor) {#6};
            \node[xshift=-1.5mm] at (anchor) {$\uparrow$};
      }{}
      \ifthenelse{\isin{#1}{right} \AND \isin{#2}{up}}{
            \coordinate (anchor) at ($({#3})!{#5}!({#4})$);
            \node[darkmsgcircle, xshift=5.0mm] at (anchor) {#6};
            \node[xshift=1.5mm] at (anchor) {$\uparrow$};
      }{}
}

\newcommand{\bwmsg}[6]{
      \ifthenelse{\isin{#1}{left} \AND \isin{#2}{down}}{
            \coordinate (anchor) at ($({#3})!{#5}!({#4})$);
            \node[msgdoublecircle, xshift=-5.5mm] at (anchor) {#6};
            \node[xshift=-1.5mm] at (anchor) {$\downarrow$};
      }{}
      \ifthenelse{\isin{#1}{right} \AND \isin{#2}{down}}{
            \coordinate (anchor) at ($({#3})!{#5}!({#4})$);
            \node[msgdoublecircle, xshift=5.5mm] at (anchor) {#6};
            \node[xshift=1.5mm] at (anchor) {$\downarrow$};
      }{}

      \ifthenelse{\isin{#1}{down} \AND \isin{#2}{right}}{
            \coordinate (anchor) at ($({#3})!{#5}!({#4})$);
            \node[msgdoublecircle, yshift=-6.0mm] at (anchor) {#6};
            \node[yshift=-2.0mm] at (anchor) {$\rightarrow$};
      }{}
      \ifthenelse{\isin{#1}{up} \AND \isin{#2}{right}}{
            \coordinate (anchor) at ($({#3})!{#5}!({#4})$);
            \node[msgdoublecircle, yshift=6.0mm] at (anchor) {#6};
            \node[yshift=2.0mm] at (anchor) {$\rightarrow$};
      }{}

      \ifthenelse{\isin{#1}{down} \AND \isin{#2}{left}}{
            \coordinate (anchor) at ($({#3})!{#5}!({#4})$);
            \node[msgdoublecircle, yshift=-6.0mm] at (anchor) {#6};
            \node[yshift=-2.0mm] at (anchor) {$\leftarrow$};
      }{}
      \ifthenelse{\isin{#1}{up} \AND \isin{#2}{left}}{
            \coordinate (anchor) at ($({#3})!{#5}!({#4})$);
            \node[msgdoublecircle, yshift=6.0mm] at (anchor) {#6};
            \node[yshift=2.0mm] at (anchor) {$\leftarrow$};
      }{}

      \ifthenelse{\isin{#1}{left} \AND \isin{#2}{up}}{
            \coordinate (anchor) at ($({#3})!{#5}!({#4})$);
            \node[msgdoublecircle, xshift=-5.5mm] at (anchor) {#6};
            \node[xshift=-1.5mm] at (anchor) {$\uparrow$};
      }{}
      \ifthenelse{\isin{#1}{right} \AND \isin{#2}{up}}{
            \coordinate (anchor) at ($({#3})!{#5}!({#4})$);
            \node[msgdoublecircle, xshift=5.5mm] at (anchor) {#6};
            \node[xshift=1.5mm] at (anchor) {$\uparrow$};
      }{}
}

\newcommand{\bwdarkmsg}[6]{
      \ifthenelse{\isin{#1}{left} \AND \isin{#2}{down}}{
            \coordinate (anchor) at ($({#3})!{#5}!({#4})$);
            \node[darkmsgdoublecircle, xshift=-5.5mm] at (anchor) {#6};
            \node[xshift=-1.5mm] at (anchor) {$\downarrow$};
      }{}
      \ifthenelse{\isin{#1}{right} \AND \isin{#2}{down}}{
            \coordinate (anchor) at ($({#3})!{#5}!({#4})$);
            \node[darkmsgdoublecircle, xshift=5.5mm] at (anchor) {#6};
            \node[xshift=1.5mm] at (anchor) {$\downarrow$};
      }{}

      \ifthenelse{\isin{#1}{down} \AND \isin{#2}{right}}{
            \coordinate (anchor) at ($({#3})!{#5}!({#4})$);
            \node[darkmsgdoublecircle, yshift=-6.0mm] at (anchor) {#6};
            \node[yshift=-2.0mm] at (anchor) {$\rightarrow$};
      }{}
      \ifthenelse{\isin{#1}{up} \AND \isin{#2}{right}}{
            \coordinate (anchor) at ($({#3})!{#5}!({#4})$);
            \node[darkmsgdoublecircle, yshift=6.0mm] at (anchor) {#6};
            \node[yshift=2.0mm] at (anchor) {$\rightarrow$};
      }{}

      \ifthenelse{\isin{#1}{down} \AND \isin{#2}{left}}{
            \coordinate (anchor) at ($({#3})!{#5}!({#4})$);
            \node[darkmsgdoublecircle, yshift=-6.0mm] at (anchor) {#6};
            \node[yshift=-2.0mm] at (anchor) {$\leftarrow$};
      }{}
      \ifthenelse{\isin{#1}{up} \AND \isin{#2}{left}}{
            \coordinate (anchor) at ($({#3})!{#5}!({#4})$);
            \node[darkmsgdoublecircle, yshift=6.0mm] at (anchor) {#6};
            \node[yshift=2.0mm] at (anchor) {$\leftarrow$};
      }{}

      \ifthenelse{\isin{#1}{left} \AND \isin{#2}{up}}{
            \coordinate (anchor) at ($({#3})!{#5}!({#4})$);
            \node[darkmsgdoublecircle, xshift=-5.5mm] at (anchor) {#6};
            \node[xshift=-1.5mm] at (anchor) {$\uparrow$};
      }{}
      \ifthenelse{\isin{#1}{right} \AND \isin{#2}{up}}{
            \coordinate (anchor) at ($({#3})!{#5}!({#4})$);
            \node[darkmsgdoublecircle, xshift=5.5mm] at (anchor) {#6};
            \node[xshift=1.5mm] at (anchor) {$\uparrow$};
      }{}
}

\makeatletter
\DeclareRobustCommand{\cev}[1]{%
  \mathpalette\do@cev{#1}%
}
\newcommand{\do@cev}[2]{%
  \fix@cev{#1}{+}%
  
\reflectbox{$\m@th#1\vec{\reflectbox{$\fix@cev{#1}{-}\m@th#1#2\fix@cev{#1}{+}$}}$}%
  \fix@cev{#1}{-}%
}
\newcommand{\fix@cev}[2]{%
  \ifx#1\displaystyle
    \mkern#23mu
  \else
    \ifx#1\textstyle
      \mkern#23mu
    \else
      \ifx#1\scriptstyle
        \mkern#22mu
      \else
        \mkern#22mu
      \fi
    \fi
  \fi
}

\usetikzlibrary{arrows.meta}
\usetikzlibrary{backgrounds}
\usepgfplotslibrary{patchplots}
\usepgfplotslibrary{fillbetween}
\pgfplotsset{%
    layers/standard/.define layer set={%
        background,axis background,axis grid,axis ticks,axis lines,axis tick labels,pre main,main,axis descriptions,axis foreground%
    }{
        grid style={/pgfplots/on layer=axis grid},%
        tick style={/pgfplots/on layer=axis ticks},%
        axis line style={/pgfplots/on layer=axis lines},%
        label style={/pgfplots/on layer=axis descriptions},%
        legend style={/pgfplots/on layer=axis descriptions},%
        title style={/pgfplots/on layer=axis descriptions},%
        colorbar style={/pgfplots/on layer=axis descriptions},%
        ticklabel style={/pgfplots/on layer=axis tick labels},%
        axis background@ style={/pgfplots/on layer=axis background},%
        3d box foreground style={/pgfplots/on layer=axis foreground},%
    },
}

\makeatletter
\let\c@theorem\@undefined
\let\theorem\@undefined \let\endtheorem\@undefined
\let\c@lemma\@undefined
\let\lemma\@undefined \let\endlemma\@undefined
\let\c@remark\@undefined
\let\remark\@undefined \let\endremark\@undefined
\let\c@proposition\@undefined
\let\proposition\@undefined \let\endproposition\@undefined
\makeatother
\newtheorem{theorem}{Theorem}
\newtheorem{lemma}{Lemma}
\newtheorem{remark}{Remark}
\newtheorem{proposition}{Proposition}
\crefname{theorem}{Theorem}{Theorems}
\Crefname{theorem}{Theorem}{Theorems}
\crefname{lemma}{Lemma}{Lemmas}
\Crefname{lemma}{Lemma}{Lemmas}
\crefname{remark}{Remark}{Remarks}
\Crefname{remark}{Remark}{Remarks}
\crefname{proposition}{Proposition}{Propositions}
\Crefname{proposition}{Proposition}{Propositions}
\crefname{figure}{Figure}{Figures}

\newcommand{\reals}{\mathbb{R}}

\newcommand{\bigO}{\mathcal{O}}

\DeclareMathOperator{\blkdiag}{blkdiag}

\newcommand{\dif}{\mathop{}\!\mathrm{d}}
\newcommand\given[1][]{\,#1\vert\,}

\jmlrvolume{}
\jmlryear{2026}
\jmlrworkshop{Probabilistic Graphical Models (PGM)}

\title{A Factor Graph Approach to Scalable Multi-Output Gaussian Process Regression}

\author{
\Name{Wouter W. L. Nuijten} \Email{w.w.l.nuijten@tue.nl}\\
\addr Eindhoven University of Technology \& Lazy Dynamics B.V., the Netherlands
\AND
 \Name{Esther G. {van Pelt}} \Email{e.g.v.pelt@tue.nl}\\
 \addr Eindhoven University of Technology, Eindhoven, the Netherlands
 \AND
  \Name{Albert Podusenko} \Email{albert@lazydynamics.com} \\
 \Name{{\.{I}}smail \c{S}en\"{o}z} \Email{isenoz@lazydynamics.com} \\
  \addr Lazy Dynamics B.V., the Netherlands
  \AND
 \Name{Wouter M. Kouw} \Email{w.m.kouw@tue.nl}\\
  \addr Eindhoven University of Technology, Eindhoven, the Netherlands
  }

\editor{Gustau Camps-Valls, Manuele Leonelli and Gherardo Varando}

\begin{document}

\maketitle

\begin{abstract}
  Multi-output Gaussian process regression scales cubically in the number of observations times outputs, and dense kernel-matrix methods need bespoke handling whenever different outputs are observed at different inputs.
  We express multi-output Gaussian process regression as a Forney-style factor graph in which a nearest-neighbor chain orders a fixed candidate set of $C$ inputs into a one-dimensional sequence.
  Along this chain, latent Matérn processes evolve through linear-Gaussian transition factors, while the linear model of coregionalization mixes $L$ latent processes into $D$ outputs through a deterministic mixing factor and per-output scalar observation factors.
  Posterior computation reduces to exact Gaussian message passing on the chain at cost $\bigO(C(DL^2 + L^3))$ after chain construction, and missing observations omit their local factor without any covariance-matrix restructuring.
  The formulation therefore scales in the number of data samples and in the rate of missing observations, while remaining best suited to candidate sets in low input dimension.
  We compare the factor-graph formulation against an exact kernel-matrix baseline, a sparse-variational inducing-point baseline, and a nearest-neighbor baseline on a synthetic input-dimension sweep and on electricity time series forecasting.
  At low input dimension the factor-graph posterior tracks the exact kernel-matrix posterior closely, and the gap grows gradually as input dimension increases while staying competitive with both approximate baselines.
  On the electricity time series our factor-graph formulation matches all three baselines in forecast accuracy while scaling linearly in the number of data points, where the exact kernel-matrix method becomes infeasible and the inducing-point baseline remains substantially slower.
\end{abstract}

\begin{keywords}
  Factor graphs, Gaussian processes, Message passing, Partial observations
\end{keywords}

\section{Introduction}
\label{sec:introduction}

Probabilistic graphical models provide a unified language for inference: belief propagation, Kalman filtering and smoothing, hidden Markov model inference, and variational message passing all arise as instances of message passing on specific factor graphs \citep{loeliger_factor_2007,senoz_variational_2021}.
Gaussian processes (GPs) are a notable outlier.
Defined by a kernel rather than a factorization, GPs are typically treated as monolithic objects, which do not scale well and have poor compositionality.
In the multi-output case, fitting a standard GP with $D$ correlated outputs and $N$ training points requires operations on a $DN \times DN$ covariance matrix, i.e., $\bigO((DN)^3)$, which becomes prohibitive as $D$ or $N$ grows \citep{alvarez_kernels_2012}.
Equally inconvenient, when different outputs are observed at different inputs, as is the norm in sensor networks, environmental monitoring, and clinical time series, dense kernel-matrix implementations typically handle these partial observations by restructuring or slicing the covariance matrix rather than making a local edit to a graph.

But non-monolithic treatments of GPs do exist.
A GP with a Mat{\'e}rn covariance function can be expressed as the solution of a linear stochastic differential equation (SDE) \citep{hartikainen_kalman_2010,sarkka_applied_2019}, and the resulting linear-Gaussian state-space model has an exact representation as a Forney-style factor graph (FFG) on which Kalman filtering and smoothing reduce to message passing \citep{loeliger_factor_2007}.
This bridge has been widely exploited in temporal settings, where the natural ordering of time supplies the one-dimensional chain that the state-space form requires.
We use this bridge as the starting point for multi-output regression: we cast low-input-dimensional multi-output GP (MOGP) regression as message passing on an FFG by combining state-space GPs along a greedy nearest-neighbor chain over a fixed candidate set of size $C$ with the Linear Model of Coregionalization (LMC), expressed as a deterministic mixing factor and per-output scalar observation factors.
The chain construction is an approximation, since $M$-dimensional input geometry is compressed onto a one-dimensional Markov sequence. Here we investigate the fidelity of such a compression.

Casting the model in this form has two consequences for inference.
First, conditional on the chain, posterior computation is exact Gaussian message passing: the $L$ latent dynamics remain block diagonal in the prior, while the LMC observation factors couple them into dense Gaussian beliefs over the joint latent state.
For fixed $D$ and $L$, inference is therefore linear in the candidate-chain length $C$, with $\bigO(C(DL^2 + L^3))$ cost after chain construction.
Second, partial observability is native to the factor-graph implementation: outputs that are unobserved at a given input simply omit their local observation message, with no covariance-matrix restructuring and no change to the model or to the inference call.
Dense kernel-matrix implementations, by contrast, must explicitly build or slice variable-size covariance matrices from the observed (output, input) pairs.

Concretely, our contribution is firstly a chain-induced factor graph formulation of low-input-dimensional multi-output GP regression, combining state-space GPs \citep{hartikainen_kalman_2010}, FFG message passing \citep{loeliger_factor_2007}, and the LMC \citep{alvarez_kernels_2012}, with $\bigO(C(DL^2 + L^3))$ per-step inference after chain construction. Approximation occurs only through the nearest-neighbor chain, inference on the resulting model is exact. Secondly, partial observability requires no restructuring of the inference procedure, as opposed to dense kernel-matrix treatments that require covariance-matrix slicing.

\section{Problem Statement} \label{sec:problem}

Let $x \in \mathcal{X} \subset \reals^M$ denote an input and $f : \mathcal{X} \to \reals^D$ a vector-valued function with $D$ correlated outputs.
Given a training set of $N$ inputs $X \in \reals^{N \times M}$ and a noisy observation matrix $Y \in \reals^{N \times D}$ with $y_i = f(x_i) + \epsilon_i$ and Gaussian noise $\epsilon_i \sim \mathcal{N}(0, \Sigma_\epsilon)$, where $\Sigma_\epsilon \in \reals^{D \times D}$ is a diagonal observation-noise covariance, we wish to infer the posterior predictive distribution of $f$ at unseen inputs under a zero-mean multi-output Gaussian process prior.
A binary mask $O \in \{0,1\}^{N \times D}$ records which entries are observed, with arbitrary missingness allowed across $(i,d)$.

We target exact Gaussian inference under the chain-induced factor-graph model with two requirements.
First, inference cost should grow linearly in the number of training points $N$ for fixed output count $D$ and fixed latent rank.
Second, the missingness mask $O$ should be absorbed into the model itself, so that arbitrary missingness patterns produce no additional restructuring cost.
Kernel-matrix methods satisfy neither requirement, as vectorized inference costs $\bigO((ND)^3)$ and varying missingness forces a variable-size covariance to be rebuilt from the observed entries.
\Cref{sec:method} presents a factor-graph reformulation that meets both requirements, and \cref{sec:evaluation} assesses posterior quality empirically.

\section{Background}
\label{sec:related_work}

% \subsection{Bayesian Optimization}
% \label{sec:rw_bo}

\subsection{Multi-Output Gaussian Processes}
% \subsection{Linear Model of Coregionalization}
\label{sec:rw_gp}
% There are a number of approaches to generalizing BO to multi-dimensional functions. 
% Scalarization approaches such as ParEGO reduce multi-objective problems to single-objective ones via random weight vectors \citep{knowles_parego_2006}, while hypervolume-based methods optimize the Pareto front directly \citep{daulton_differentiable_2020,daulton_parallel_2021}.
% Information-theoretic acquisition functions extend to the multi-objective setting as well \citep{hernandez-lobato_predictive_2016}.
% Multi-task BO shares surrogate information across correlated tasks \citep{swersky_multitask_2013}.
% Our work uses a scalarization-based UCB, but the factor graph formulation is agnostic to the choice of acquisition function.

Exact multi-output GP inference is costly because covariance matrices couple inputs and outputs. We focus on the Linear Model of Coregionalization (LMC), which mixes $L$ latent GPs $g_l(x)$ through a mixing matrix $W \in \mathbb{R}^{D \times L}$ with induced cross-output covariance
\begin{align}
  f_d(x) = \sum_{l=1}^L w_{dl} g_l(x) \, , \qquad   \operatorname{Cov}\!\left[f_d(x), f_{d'}(x')\right] = \sum_{l=1}^{L} w_{dl} w_{d'l} k_l(x,x') \, .
\end{align}
At a fixed input, or with shared latent input covariance, the output-side structure is low-rank plus diagonal noise. For $L \ll D$, Woodbury applies inverses of $W S W^\top + \Sigma_\epsilon$ through an $L \times L$ inner system, costing $\bigO(DL^2 + L^3)$, or $\bigO(DL^2)$ for fixed~$L$, instead of dense $D \times D$ operations. The special case $L=1$ is the Intrinsic Coregionalization Model (ICM), which is less expressive than LMC \citep{alvarez_computationally_2011,bonilla_multitask_2007}.
% \citet{bruinsma_scalable_2020} exploit orthogonal mixing matrices for exact multi-output inference at cost linear in the number of latent processes.
% \esther{Het lijkt alsof deze zin hieronder het nut van de hele alinea uit moet leggen, maar ik vind het niet heel duidelijk zo.}
% Streaming sparse GP approximations enable online surrogate updates as new data arrives \citep{bui_streaming_2017}.
% \wmk{todo me: explain LMC}

\subsection{State-Space Gaussian Processes}
\label{sec:ss_gp}

The cubic complexity in samples has driven extensive work on scalable approximations.
Sparse variational methods based on inducing points reduce this cost while preserving a variational lower bound on the marginal likelihood \citep{smola_laplace_2004,bruinsma_scalable_2020}.
However, these are approximations. Temporal GPs with Mat\'ern covariance functions admit an exact representation as linear SDEs, converting GP regression into Kalman filtering and smoothing at $\bigO(N)$ cost \citep{hartikainen_kalman_2010,sarkka_bayesian_2013,sarkka_applied_2019}.
% \esther{ik vind dit een beetje random hier:}
% This equivalence extends to non-Gaussian likelihoods via approximate message passing schemes such as expectation propagation and variational inference \citep{nickisch_state_2018,wilkinson_state_2020}.
% Efficient banded linear algebra further enables gradient-based hyperparameter learning within the state-space framework \citep{durrande_banded_2019}.
%
For example, a scalar temporal GP with a stationary Mat\'ern $3/2$ covariance function $\kappa(t,t')$, length-scale $\ell$, and output scale $\gamma^2$ can be represented exactly as the solution of the linear SDE
% \begin{equation}
%   \label{eq:sde}
%   \dif\bm{x}(t) = F\,\bm{x}(t)\,\dif t + L\,\dif\bm{w}(t)\,,
% \end{equation}
% where $\bm{x}(t) = [f(t),\, f'(t)]^\top$ is the two-dimensional state, $L = [0,\, 1]^\top$ is the diffusion matrix, and
% \begin{equation}
%   \label{eq:F_matrix}
%   F = \begin{bmatrix} 0 & 1 \\ -\lambda^2 & -2\lambda \end{bmatrix}\,,
%   \quad \lambda = \frac{\sqrt{3}}{\ell}\,.
% \end{equation}
\begin{align}
  \dif\begin{bmatrix} g(t) \\ \dot{g}(t) \end{bmatrix} = \begin{bmatrix} 0 & 1 \\ -\lambda^2 & -2\lambda \end{bmatrix} \begin{bmatrix} g(t) \\ \dot{g}(t) \end{bmatrix} \dif t + \begin{bmatrix} 0 \\ 1 \end{bmatrix} \dif w(t) \, ,
\end{align}
for $\lambda = \sqrt{3} / \ell$ and white noise $w(t)$ with spectral density $4\lambda^3 \gamma^2$, where $\gamma^2$ is the GP output scale \citep{hartikainen_kalman_2010}. We shall refer to the state vector of the GP $g(t)$ and its derivative in time $\dot{g}(t)$ as $z(t)$.
The stationary covariance of the state is $P_\infty = \text{diag}(\begin{bmatrix} \gamma^2 &  \lambda^2 \gamma^2 \end{bmatrix})$.
%\begin{equation}
%  \label{eq:P_inf}
%  P_\infty = \begin{bmatrix} \gamma^2 & 0 \\ 0 & \lambda^2 \gamma^2 \end{bmatrix}\,.
%\end{equation}
This SDE can be discretized to form a discrete-time state-space model over states $z_t = [g_t, \, \dot{g}_t]^{\top}$.
Given the inter-point distance $\Delta_i$ between consecutive points in the chain, the discrete-time transition and process noise matrices are
\begin{equation}
  \label{eq:discrete_transition}
  A_i \! = \! \exp \left (\begin{bmatrix} 0 & 1 \\ -\lambda^2 & -2\lambda \end{bmatrix} \Delta_i \right )\, ,
  \qquad  Q_i \! = \! P_\infty \! - \! A_i\, P_\infty\, A_i^\top .
\end{equation}
The observation model extracts the function value from the state through the observation vector $h = [1,\, 0]^\top$, so that $g_t = h^\top z_t$.
This yields a linear-Gaussian state-space model equivalent to the GP, with $\bigO(N)$ inference in the number of points for a single scalar output via Bayesian filtering and smoothing \citep{sarkka_bayesian_2013}. The multi-output cost is treated in \cref{sec:inference}.

\subsection{Factor Graphs and Message Passing}
\label{sec:rw_fg}

Factor graphs provide a unified language for probabilistic inference, subsuming belief propagation, variational inference and expectation propagation as message computation on a graph \citep{kschischang_factor_2001,senoz_variational_2021,heess_learning_2013}.
Many classical algorithms, including Kalman filtering and smoothing, can be written as message passing on particular factor graphs \citep{loeliger_factor_2007,palmieri_unifying_2022}.
Take, for example, the below probabilistic model over $x_1, x_2$ and $x_3$, and assume that we are interested in the marginal distribution for $x_3$:
\begin{equation}
  p(x_1, x_2, x_3) = p(x_1 \given x_2) p(x_2 \given x_3) p(x_3) \, , \qquad p(x_3) = \int p(x_1, x_2, x_3) \mathrm{d}(x_1,x_2) \, .
\end{equation}
In a Forney-style factor graph (FFG), factors are nodes and variables are edges. Each edge connects at most two factor nodes \citep{loeliger_introduction_2004}.
The notation is illustrated in \cref{fig:FFGexample}: boxes denote factors, edge labels denote variables, and arrows label messages passed along those variables.
\begin{figure}[!htb]
  \centering
  \resizebox{.58\textwidth}{!}{\begin{tikzpicture}

    \node [box] (f_A) {$f_A$};
    \node [box, right of=f_A, node distance=22mm] (f_B) {$f_B$};
    \node [box, right of=f_B, node distance=34mm] (f_C) {$f_C$};
    \node [left of=f_A, node distance=10mm] (half_A) {};
    \node [right of=f_C, node distance=10mm] (half_C) {};

    \draw[] (half_A) -- node[below] {$\scriptstyle{x_1}$} (f_A);
    \draw[] (f_A) -- node[below] {$\scriptstyle{x_2}$} node[above] {$\scriptstyle{\overrightarrow{\nu}(x_2)}$} (f_B);
    \draw[] (f_B) -- node[below] {$\scriptstyle{x_3}$} node[above] {$\scriptstyle{\overrightarrow{\nu}(x_3)} \ \scriptstyle{\overleftarrow{\nu}(x_3)}$} (f_C);
    % \draw[] (f_C) -- node[below] {$\scriptstyle{x_4}$} (half_C);

    \node[dashed, fit=(half_A)(f_A), draw, inner sep=2mm] {};
    \node[dashed, fit=(half_A)(f_B), draw, inner sep=4mm] {};

\end{tikzpicture}}
  \caption{Example FFG. Messages along an edge summarize all factors of other edges.}
  \label{fig:FFGexample}
\end{figure}
For a generic node $f(y, x_1, \dots, x_{n})$, the sum-product message to edge $y$ is
\begin{equation}\label{sprule}
  \overrightarrow{\nu}(y) = \int \dots \int f(y, x_1, \dots, x_{n}) \prod_{i=1}^{n} \overrightarrow{\nu}(x_i)\mathrm{d}x_i \, ,
\end{equation}
where $\overrightarrow{\nu}(x_i)$ are incoming messages to the node.
Priors can be represented as terminal factors, likelihoods send observation information back into the graph, and posterior marginals are obtained by multiplying incoming messages on an edge \citep{senoz_variational_2021}.
% Variational message passing is a variant where one defines a Bethe free energy cost functional to obtain approximate marginal distributions \citep{senoz_message_2022}. Minimizing the Bethe free energy produces a message computation rule in the form:
% \begin{align}
%     \overrightarrow{\nu}(y) = \exp \int \dots \int \prod_{i=1}^{n} \overrightarrow{\nu}(x_i) \ln f(y,x_1, \dots, x_n) \mathrm{d}x_i \, .
% \end{align}
%
A key property of FFGs is compositionality: model changes, such as adding or removing observation factors, require only local graph modifications, not a new inference algorithm.
Reactive message passing exploits this structure by compiling model declarations into message updates and recomputing only affected messages when data or factors change \citep{bagaev_reactive_2023}.
This compositionality is central to our treatment of partial observations.

\section{Message Passing-Based Multi-Output GP Inference}
\label{sec:method}

%This section describes the proposed inference framework. We specify a multi-output state-space GP in factor graph form (\cref{sec:model_spec}) and explain how to infer states under partial observation (\cref{sec:inference}), yielding the posterior predictive distribution at every candidate input.
% First, we review the state-space representation of GPs with Mat\'ern covariance functions (\cref{sec:ss_gp}).  
% We then combine this with the LMC for multiple outputs and per-output scalar observation noise (\cref{sec:lmc}).
% \Cref{sec:partial_obs} describes how the factor graph formulation handles partial observations.

\subsection{Model Specification} \label{sec:model_spec}
%We first describe a nearest neighbor chain-based ordering for fixed low-dimensional candidate sets (\cref{sec:nn_ordering}), and then construct a factor graph based on a state-space GP with LMC (\cref{sec:ssp+lmc}).

\subsubsection{Nearest-Neighbor Chain Ordering}
\label{sec:nn_ordering}

Extending state-space GPs beyond one-dimensional inputs is an active research area.
In spatiotemporal settings, a natural temporal ordering exists, and the spatial structure can be handled separately via inducing points \citep{sarkka_spatiotemporal_2013,tebbutt_combining_2021,hamelijnck_spatiotemporal_2021}.
An alternative approach is the SPDE method of \citet{lindgren_explicit_2011}, which discretizes the stochastic partial differential equation on a mesh to obtain a sparse precision matrix for Mat\'ern fields in arbitrary dimensions.
Recently, \citet{li_learning_2025} proposed a universal method for converting any stationary temporal GP into a state-space model via spectral factorization.
Our work differs from these spatiotemporal approaches in that we do not assume separable structure. Instead, we construct a one-dimensional chain via nearest-neighbor ordering. This makes state-space inference applicable to non-temporal candidate sets, but it is an approximation whose fidelity depends on how well the chain preserves the relevant geometry.
A closely related family also relies on nearest-neighbor orderings: the Vecchia approximation \citep{vecchia_estimation_1988} conditions each point on a small set of preceding neighbors to obtain a sparse Cholesky factor of the precision matrix, realized as the hierarchical nearest-neighbor GP of \citet{datta_hierarchical_2016} and generalized by \citet{katzfuss_general_2021}. These methods produce an \emph{approximate} sparse precision from variable-size conditioning sets, whereas our Markov-1 chain gives an exact message-passing recursion at the cost of compressing the $M$-dimensional geometry into a single chain.

% State-space GPs require a one-dimensional ordering of the data points.
For $M$-dimensional inputs $X \in \reals^{C \times M}$, with fixed candidate-set size $C$, we construct this ordering via a greedy nearest-neighbor heuristic. Let $\pi : \{1,\ldots,C\} \to \{1,\ldots,C\}$ denote the resulting permutation, so that $x_{\pi(i)}$ is the $i$-th point visited in the chain: starting from an arbitrary point (e.g., $x_1$), we repeatedly visit the closest unvisited point in Euclidean distance and record the inter-point distances $\Delta_i = \|x_{\pi(i)} - x_{\pi(i-1)}\|$, which serve as the time steps in the state-space model.
%Points close in the original input space have small $\Delta_i$ (tight coupling), while distant points have large $\Delta_i$ (weak coupling, approaching the prior).
%
This ordering defines a chain-induced kernel approximation.
Let $s_{\pi(1)} = 0$ and $s_{\pi(i)} = \sum_{r=2}^{i} \Delta_r$ be the one-dimensional coordinate induced by the chain.
The chain distance is the distance between these coordinates and the chain kernel is the kernel defined by the Matérn kernel covariance function evaluated at the chain distance:
\begin{equation} \label{eq:K_pi}
  \left[K_{\pi} \right]_{ij} = \kappa\big( | s_{\pi(i)} - s_{\pi(j)} | \big) \, .
\end{equation}
A naive greedy approach to constructing the chain will cost $\bigO(C^2)$, but techniques such as $kd$-trees can speed this up to $\bigO(C\log C)$ in low input dimension \citep{cormen_introduction_2022}.

To characterize the fidelity of the chain-induced kernel approximation, we derive a bound on the distance in Matérn-kernel GP posterior parameters.
First, we tackle the case of single-output GP, i.e., for the $l$-th GP, and then generalize to multi-output, i.e., all $L$ GPs.
\begin{lemma} \label{lem:chain_kernel_bound}
  Let $\kappa_l(\cdot)$ be a Matérn-class kernel with length-scale $\ell_l > 0$ and output scale $\gamma_l^2 > 0$, and let $\alpha_l$ be its Lipschitz constant on the range of distances induced by $X$ and the chain $\pi$. Define $\left[K_l\right]_{ij} = \kappa_l\big(\|x_i - x_j\|\big)$, $\left[K_{\pi,l}\right]_{ij} = \kappa_l\big(|s_{\pi(i)} - s_{\pi(j)}|\big)$ and $\varepsilon_{\pi}
    = \max_{i,j}\, \bigl|\|x_i-x_j\| - |s_{\pi(i)} - s_{\pi(j)}| \bigr|$.
  Then,
  \begin{equation}
    \|K_l - K_{\pi,l}\|_2 \leq C\,\alpha_l\,\varepsilon_{\pi} \, .
  \end{equation}
\end{lemma}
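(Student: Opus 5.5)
The plan is to bound the spectral norm entrywise, using the elementary inequality $\|E\|_2 \le C \max_{i,j}|E_{ij}|$ for a $C\times C$ matrix $E$. Here $C$ is the candidate-set size, which is exactly the factor appearing in the statement. Set $E = K_l - K_{\pi,l}$. One clean route to the inequality is $\|E\|_2 \le \|E\|_F \le \sqrt{C^2 \max_{ij} E_{ij}^2} = C\max_{ij}|E_{ij}|$. An alternative is $\|E\|_2 \le \sqrt{\|E\|_1\|E\|_\infty}$, where each of the maximal absolute column and row sums is at most $C\max_{ij}|E_{ij}|$. Either route gives the constant $C$ directly. Note that $E$ is symmetric, so one could also invoke $\|E\|_2 \le \|E\|_\infty$.

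The remaining step is the entrywise bound. For each pair $(i,j)$, both $\|x_i - x_j\|$ and $|s_{\pi(i)} - s_{\pi(j)}|$ lie in the range of distances induced by $X$ and $\pi$. On that range, $\kappa_l$ (viewed as a function of a nonnegative scalar distance) is $\alpha_l$-Lipschitz by hypothesis. Therefore
\begin{equation*}
|E_{ij}| = \bigl|\kappa_l(\|x_i-x_j\|) - \kappa_l(|s_{\pi(i)}-s_{\pi(j)}|)\bigr| \le \alpha_l \bigl|\|x_i-x_j\| - |s_{\pi(i)}-s_{\pi(j)}|\bigr| \le \alpha_l\varepsilon_\pi .
\end{equation*}
Taking the maximum over $(i,j)$ and combining with the first paragraph yields $\|K_l - K_{\pi,l}\|_2 \le C\alpha_l\varepsilon_\pi$.

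The main subtlety is the indexing convention in the definition of $K_{\pi,l}$. The entry $[K_{\pi,l}]_{ij}$ pairs $x_i$ with chain coordinates indexed through $\pi$, so one must check that $s_{\pi(i)}$ is the coordinate attached to the same point $x_i$. If it is not, one relabels by the permutation $\pi$. Conjugating by a permutation matrix leaves the spectral norm unchanged, so it is harmless. With a consistent labelling, $\varepsilon_\pi$ is precisely the maximal entrywise distance discrepancy, and the argument goes through.

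A secondary point is that the Lipschitz constant must hold on an interval containing both distance sets, for instance $[0,\max(\operatorname{diam}X, s_{\pi(C)})]$. Matérn kernels are globally Lipschitz in the distance, with constant at most $\sup_r|\kappa_l'(r)|$, which for Matérn $3/2$ is of order $\gamma_l^2/\ell_l$. Hence such an $\alpha_l$ always exists, and the hypothesis as stated suffices.
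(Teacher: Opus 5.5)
Your proof is correct and matches the paper's: the Lipschitz hypothesis gives the entrywise bound $\|K_l - K_{\pi,l}\|_{\max} \le \alpha_l \varepsilon_\pi$, and the paper then uses $\|A\|_2 \le \sqrt{\|A\|_1\|A\|_\infty} \le C\|A\|_{\max}$, which you give as your alternative route; your Frobenius route is an equally valid variant. Your indexing caveat is not needed for the statement as written, because $K_{\pi,l}$ and $\varepsilon_\pi$ both compare $(x_i,x_j)$ with the same pair $(s_{\pi(i)}, s_{\pi(j)})$, so the entrywise bound holds verbatim without any relabelling.
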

The proof is in \cref{app:lemma1}. Here $C$ is the candidate-set size, so the bound scales with the number of points but vanishes as the chain distortion $\varepsilon_\pi \to 0$, that is, as the one-dimensional ordering better preserves the input geometry.
Under scalar observation noise of variance $\sigma_n^2$, the GP posterior parameters obtained by conditioning on observations $y$ are $\mu = K(K + \sigma_n^2 I)^{-1}y$ and $\Sigma = K - K(K + \sigma_n^2 I)^{-1}K$ \citep{rasmussen_gaussian_2006}, with $\mu_\pi$ and $\Sigma_\pi$ the same expressions using the chain-induced kernel of \cref{eq:K_pi} in place of $K$. A resolvent perturbation argument converts a bound on $\|K - K_\pi\|_2$ into bounds on these parameters. We defer that step (\cref{lem:posterior_perturbation} in \cref{app:lemma2}) and state the multi-output LMC form directly. Writing $w_l := W_{:,l} \in \reals^{D}$ for the $l$-th column of the mixing matrix, the Euclidean and chain-induced LMC covariance matrices are defined as
\begin{equation}
  K = \sum_{l=1}^{L} (w_l w_l^\top) \otimes K_l \, , \qquad
  K_{\pi} = \sum_{l=1}^{L} (w_l w_l^\top) \otimes K_{\pi,l} \, ,
\end{equation}
where $K_l, K_{\pi,l}$ are as in \cref{lem:chain_kernel_bound}.
\begin{theorem}
  \label{thm:lmc_posterior_perturbation}
  Let $\eta = \|K - K_{\pi}\|_2$. Assume scalar observation noise $\sigma_n^2 I$ and that each $\kappa_l$ is a Matérn-class kernel with parameters $(\ell_l, \gamma_l^2)$ and Lipschitz constant $\alpha_l$. Then the multi-output GP posterior parameters obtained by conditioning on output matrix $Y$ satisfy
  \begin{align}
    \|\mu - \mu_{\pi}\|_2       & \leq \left(\frac{\eta}{\sigma_n^2} + \frac{(\|K\|_2 + \eta)\,\eta}{\sigma_n^4}\right) \|Y\|_2 \, ,                    \\
    \|\Sigma - \Sigma_{\pi}\|_2 & \leq \eta + \frac{\eta\,(\|K\|_2 + \|K_{\pi}\|_2)}{\sigma_n^2} + \frac{\eta\,\|K\|_2\,\|K_{\pi}\|_2}{\sigma_n^4} \, ,
  \end{align}
  with $\eta \leq C\, \varepsilon_\pi \sum_{l=1}^{L} \|w_l\|_2^2\, \alpha_l$.
\end{theorem}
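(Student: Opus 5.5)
The argument splits into two independent parts: bounding $\eta$ with \cref{lem:chain_kernel_bound}, and a resolvent perturbation argument (the content of \cref{lem:posterior_perturbation}) for the posterior parameters. The latter uses only that $K$ and $K_\pi$ are symmetric positive semidefinite and differ by $\eta$ in spectral norm, so it transfers verbatim from the single-output to the LMC setting.

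\emph{Bound on $\eta$.} Write
\[
K-K_\pi=\sum_{l=1}^{L}(w_lw_l^\top)\otimes(K_l-K_{\pi,l}).
\]
Apply the triangle inequality, then $\|A\otimes B\|_2=\|A\|_2\|B\|_2$ and $\|w_lw_l^\top\|_2=\|w_l\|_2^2$. This gives $\eta\le\sum_l\|w_l\|_2^2\,\|K_l-K_{\pi,l}\|_2$. Inserting \cref{lem:chain_kernel_bound} for each $l$ yields $\eta\le C\varepsilon_\pi\sum_l\|w_l\|_2^2\alpha_l$. The same $\pi$ and $\varepsilon_\pi$ serve for every latent process, so no extra care is needed here.

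\emph{Mean.} Set $R=(K+\sigma_n^2I)^{-1}$ and $R_\pi=(K_\pi+\sigma_n^2I)^{-1}$, and vectorize $Y$. Since both kernels are PSD, $\|R\|_2,\|R_\pi\|_2\le\sigma_n^{-2}$. The resolvent identity gives $R-R_\pi=R(K_\pi-K)R_\pi$, so $\|R-R_\pi\|_2\le\eta/\sigma_n^4$. Decompose
\[
\mu-\mu_\pi=(K-K_\pi)R\,y+K_\pi(R-R_\pi)\,y.
\]
The first term is at most $\eta\sigma_n^{-2}\|Y\|$. For the second, use $\|K_\pi\|_2\le\|K\|_2+\eta$ to get at most $(\|K\|_2+\eta)\eta\sigma_n^{-4}\|Y\|$, which is the stated bound. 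One should note that $\|Y\|_2$ refers to the norm of the vectorized $Y$; Frobenius equals the vector 2-norm, and this is how I would read it.

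\emph{Covariance.} Write $\Sigma-\Sigma_\pi=(K-K_\pi)-(KRK-K_\pi R_\pi K_\pi)$ and telescope:
\[
KRK-K_\pi R_\pi K_\pi=(K-K_\pi)RK+K_\pi(R-R_\pi)K+K_\pi R_\pi(K-K_\pi).
\]
The three pieces are bounded by $\eta\|K\|_2/\sigma_n^2$, $\|K_\pi\|_2\eta\|K\|_2/\sigma_n^4$, and $\|K_\pi\|_2\eta/\sigma_n^2$. Adding the leading $\eta$ gives exactly the claimed expression.

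\emph{Main obstacle.} The main obstacle is bookkeeping rather than depth. The one subtle point is justifying $\|R_\pi\|_2\le\sigma_n^{-2}$, which requires $K_\pi$ to be PSD. This holds because each $K_{\pi,l}$ is a valid Matérn Gram matrix on the one-dimensional points $s_{\pi(i)}$, and Kronecker products and sums of PSD matrices remain PSD. I would state this explicitly, since a generic perturbation of $K$ need not be PSD.
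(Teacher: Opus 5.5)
Your proposal is correct and follows the paper's proof step for step. You bound $\eta$ using the triangle inequality, $\|A\otimes B\|_2=\|A\|_2\|B\|_2$ and \cref{lem:chain_kernel_bound}, then run the resolvent-identity argument that the paper isolates as \cref{lem:posterior_perturbation}, with the same split of $\mu-\mu_\pi$ and the same three-term telescoping of the covariance difference. Two of your additions are worthwhile: the explicit check that $K_\pi\succeq 0$ (Mat\'ern Gram matrices on the chain coordinates, preserved under Kronecker products and sums), which the paper only asserts, and reading $\|Y\|_2$ as the Euclidean norm of the vectorized $Y$, which the paper leaves ambiguous.
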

The proof is in \cref{app:thm1}. Thus, the difference between the per-latent bound and the total multi-output GP bound is the weighted sum of Lipschitz constants.
For inputs that are roughly uniform in $[0,1]^M$, the typical nearest-neighbor distance scales as $C^{-1/M}$, so the greedy chain has unnormalized path length $\sum_i \tilde\Delta_i = \Theta(C^{1-1/M})$. The chain coordinate range $\max_{i,j} |s_{\pi(i)} - s_{\pi(j)}|$ inherits this scaling, while the Euclidean diameter is bounded by $\sqrt{M}$. Hence the worst-case distortion satisfies $\varepsilon_\pi = O(C^{1-1/M})$ for $M \ge 2$, while at $M{=}1$ the sorted chain is an isometry of the Euclidean ordering and $\varepsilon_\pi = 0$. Plugging back into \cref{thm:lmc_posterior_perturbation}, $\eta = \mathcal{O}\bigl(C^{2-1/M} \sum_l \|w_l\|_2^2 \alpha_l\bigr)$. This is informative for small $M$, but becomes uninformative as $M$ grows.
%This matches the growing chain stretch and widening accuracy gap reported in \cref{sec:dim_sweep}. 
We stress that both bounds are worst case. $\varepsilon_\pi$ is a maximum over all pairs, and $\eta$ enters the posterior bounds through $\sigma_n^{-4}$. So at practical $C$ and small observation noise they are loose and certify no numerically small posterior error.
%They merely identify $M$ as the quantity that controls chain fidelity.

Different starts of the chain produce different permutations $\pi$ and hence different chain coordinate ranges. A classical result on the metric traveling salesman problem bounds this dependence.
\begin{proposition} \label{prop:chain_start}
  Define $T(\pi) := \max_i s_{\pi(i)}$ and let $\pi_1, \pi_2$ be two greedy nearest-neighbor chains over the same candidate set $X$ from different starting points. Then
  \begin{equation}
    T(\pi_1) \;\leq\; \tfrac{1}{2}\bigl(\lceil \log_2 C \rceil + 1\bigr)\,T(\pi_2) \, .
  \end{equation}
\end{proposition}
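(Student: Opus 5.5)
The plan is to reduce the statement to the classical analysis of the nearest-neighbor heuristic for the metric traveling salesman problem by Rosenkrantz, Stearns and Lewis, using the Euclidean distance on the candidate set $X$ as the metric. Both $T(\pi_1)$ and $T(\pi_2)$ are lengths of Hamiltonian paths through the $C$ candidate points, since $s_{\pi(C)} = \sum_{r=2}^{C}\Delta_r$ and the coordinates are increasing. Only $\pi_1$ needs to be greedy. For $\pi_2$ I will use only that it is some Hamiltonian path, so the bound will in fact hold against any ordering.

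First, set up the RSL charging scheme on the greedy chain $\pi_1$. To each visited point $p=x_{\pi_1(i)}$ with $i<C$, assign the length $l_p = \Delta_{i+1}$ of the step leaving it, and set $l_p=0$ for the last point. Then $T(\pi_1)=\sum_p l_p$. Greediness gives the key separation property: if $p$ is visited before $q$, then $q$ was still unvisited when we left $p$, so $\|p-q\| \ge l_p$. Hence $\|p-q\|\ge \min(l_p,l_q)$ for all $p\neq q$. The triangle inequality along $\pi_2$ also gives $l_p \le T(\pi_2)$ for every $p$.

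Next, run the RSL counting argument. Sort the $l_p$ in decreasing order, $l_{(1)}\ge l_{(2)}\ge\dots$. For each $k$, restrict $\pi_2$ to the $2k$ (or fewer) points with the largest $l$-values and shortcut it, which by the triangle inequality is no longer than $T(\pi_2)$. Extract a matching from its edges, in which each edge $(p,q)$ costs at least $\min(l_p,l_q)$. Summing over the dyadic blocks $k=1,2,4,\dots$ then gives at most $\lceil\log_2 C\rceil+1$ blocks. Each block contributes at most a fixed multiple of $T(\pi_2)$, and this yields a bound of the form $T(\pi_1)\le c(\lceil \log_2 C\rceil+1)\,T(\pi_2)$.

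The main obstacle is the constant $c=\tfrac12$. In RSL the factor $\tfrac12$ arises because the shortcut optimal \emph{tour} on $2k$ points splits into two disjoint perfect matchings, one of which costs at most half the tour. With an open path on $2k$ points, the odd-indexed edges form one perfect matching, but the even-indexed edges miss the two endpoints. The naive fix of closing the path costs up to another $T(\pi_2)$ and loses the factor $2$. I would first try to recover $\tfrac12$ by pairing the odd- and even-indexed edge sets of the shortcut path and charging the two uncovered endpoints directly with the bound $l_p\le T(\pi_2)$, absorbing them into the extra $+1$ block. Failing that, I would apply RSL to the closed tour obtained from $\pi_2$, which gives $T(\pi_1)\le \tfrac12(\lceil\log_2 C\rceil+1)\,\mathrm{OPT}_{\text{tour}}$ together with $\mathrm{OPT}_{\text{tour}}\le 2T(\pi_2)$. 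If the sharper pairing cannot be made to work, I would state the proposition with this weaker constant, $T(\pi_1)\le(\lceil\log_2 C\rceil+1)\,T(\pi_2)$. The point the paper needs survives either way: the chain coordinate range depends on the starting point only up to a logarithmic factor.
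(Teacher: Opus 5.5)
Your diagnosis of the obstacle is correct, but the obstacle cannot be overcome. With the constant $\tfrac12$ the proposition is false for open chains, so the pairing-and-charging rescue you plan to try first must fail.

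Take the four collinear points $X=\{-11,0,10,32\}$, in $\reals$ or on a line in any $\reals^M$. The greedy chain started at $0$ visits $0,10,-11,32$, because from $10$ the point $-11$ (distance $21$) beats $32$ (distance $22$). Hence $T(\pi_1)=10+21+43=74$. The chain started at $-11$ visits the points in sorted order, so $T(\pi_2)=11+10+22=43$. Since $\tfrac12(\lceil\log_2 4\rceil+1)\,T(\pi_2)=\tfrac32\cdot 43=64.5<74$, the claimed inequality fails at $C=4$.

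The paper's proof obtains the $\tfrac12$ by quoting Rosenkrantz, Stearns and Lewis as a bound on the greedy Hamiltonian path against the shortest Hamiltonian path $T^\star$. Their theorem instead compares the closed nearest-neighbor tour with the optimal tour. There the factor $\tfrac12$ comes from exactly the two tour-specific facts you identified:
\begin{enumerate}
\item Any two points split a tour into two arcs. This gives $l_p\le\tfrac12\,\mathrm{OPT}_{\text{tour}}$, whereas along a path you only get $l_p\le T(\pi_2)$.
\item A shortcut tour on $2k$ points has $2k$ edges and splits into two perfect matchings, whereas a shortcut path has only $2k-1$ edges.
\end{enumerate}

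Your fallback is the right repair, and you should state it as the proposition rather than as a last resort. The greedy path from $\pi_1(1)$ is the nearest-neighbor tour from that start minus its closing edge. The tour theorem bounds that tour by $\tfrac12(\lceil\log_2 C\rceil+1)\,\mathrm{OPT}_{\text{tour}}$. Closing $\pi_2$ gives $\mathrm{OPT}_{\text{tour}}\le 2\,T(\pi_2)$. Together these give
\[
T(\pi_1)\le(\lceil\log_2 C\rceil+1)\,T(\pi_2)
\]
for every pair of starts, and indeed against any Hamiltonian ordering $\pi_2$.

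As you note, everything the paper uses the proposition for survives. Combined with $\varepsilon_\pi\le T(\pi)$, the posterior-perturbation bound still varies by at most an $\mathcal{O}(\log C)$ factor across starting points, now with leading constant $1$ instead of $\tfrac12$.
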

The proof is in \cref{app:prop1}. Since $\varepsilon_\pi \le T(\pi)$ for every chain, the bound of \cref{thm:lmc_posterior_perturbation} varies by at most an $\mathcal{O}(\log C)$ factor across starting points. \cref{app:chain_start} measures the variation that actually occurs, and finds it far inside this envelope.

\subsubsection{State-Space GP with LMC}
\label{sec:ssp+lmc}

In state-space form, the full state $z_i \in \reals^{2L}$ is the concatenation of all latent states.
The system matrices are block-diagonal:
\begin{align}
  \label{eq:block_diag}
  A_i  \!=\! \blkdiag\!\big(A_i^{(1)}, .., A_i^{(L)}\big) ,           \ \
  Q_i \! =\! \blkdiag\!\big(Q_i^{(1)}, .., Q_i^{(L)}\big),           \ \
  P  \! =\! \blkdiag\!\big(P_\infty^{(1)}, .. , P_\infty^{(L)}\big) .
\end{align}
The observation matrix combines the mixing matrix $W$ with the per-latent observation vectors $h^{(l)} = [1,\,0]^\top$ (each picking out the function value from the $l$-th latent state):
\begin{equation}
  \label{eq:H_matrix}
  H = \begin{bmatrix}
    W_{:,1}\, h^{(1)\top} & \cdots & W_{:,L}\, h^{(L)\top}
  \end{bmatrix} \in \reals^{D \times 2L}\, ,
\end{equation}
so that $H\,z_i$ yields a $D$-dimensional vector.
Each output dimension is observed independently with per-output scalar noise variance $\tau_d^{-1}$.
This per-output scalar formulation keeps the model fully linear-Gaussian, enabling exact single-pass inference without variational approximations.
% The full formulation preserves linear scaling in C for fixed D and L.
%
\begin{figure*}[tb]
  \centering
  %    \resizebox{!}{150pt}{\input{graphs/ffg_kmgp}} \hfill
  %    \resizebox{!}{150pt}{\input{graphs/ffg.tex}}
  \resizebox{\textwidth}{!}{\begin{tikzpicture}
    % nodes
        % top part
        \node[smallbox] (x) {$\times$};
        \node[blackbox, above=5mm of x] (box_at) {};
        \node[left=1mm of box_at, minimum size = 0 mm, inner sep = 0 mm, outer sep = 0 mm] (at) {$A_i$};
        \node[box, right=10mm of x] (normal_x) {$\mathcal{N}$};
        \node[blackbox, above=4mm of normal_x] (box_qt) {};
        \node[left=1mm of box_qt, minimum size = 0 mm, inner sep = 0 mm, outer sep = 0 mm] (qt) {$Q_i$};
        \node[smallbox, right=10mm of normal_x] (eq_x) {$=$};
        \node[smallbox, below=10mm of eq_x] (mult_my1) {$\times$};
        \node[left=5mm of eq_x] (anc_h) {};
        \node[smallbox, below=5mm of anc_h] (eq_h) {$=$};

        % anchors
        \node[left=15mm of x] (anc_prev) {$\cdots$};
        \node[right=75mm of eq_x] (anc_next) {$\cdots$};
        \node[below=5mm of anc_prev] (anc_hprev) {$\cdots$};
        \node[below=5mm of anc_next] (anc_hnext) {$\cdots$};
        
        % initial
        \node[box, left=5mm of anc_prev] (normal_x0) {$p(z_0)$};
        \node[blackbox, left=7.5mm of anc_hprev] (box_h0) {};
        \node[left=1mm of box_h0, minimum size = 0 mm, inner sep = 0 mm, outer sep = 0 mm] (h0) {$H$};
        
        % block yt1
        \node[smallbox, below=5mm of mult_my1] (eq_my1) {$=$};
        \node[smallbox, below=5mm of eq_my1] (mult_e1) {$\times$};
        \node[blackbox, left=5mm of mult_e1] (box_e1) {};
        \node[left=1mm of box_e1, minimum size = 0 mm, inner sep = 0 mm, outer sep = 0 mm] (e1) {$e_1$};
        \node[box, below=5mm of mult_e1] (normal_y1) {$\mathcal{N}$};
        \node[blackbox, left=4mm of normal_y1] (box_tau1) {};
        \node[left=1mm of box_tau1, minimum size = 0 mm, inner sep = 0 mm, outer sep = 0 mm] (tau1) {$\tau_1$};
        \node[blackbox, below=5mm of normal_y1] (box_yt1) {};
        \node[below=1.5mm of box_yt1, minimum size = 0 mm, inner sep = 0 mm, outer sep = 0 mm] (yt1) {$y_{i,1}$};

        % block yt2
        \node[smallbox, right=25mm of eq_my1] (eq_my2) {$=$};
        \node[smallbox, below=5mm of eq_my2] (mult_e2) {$\times$};
        \node[blackbox, left=5mm of mult_e2] (box_e2) {};
        \node[left=1mm of box_e2, minimum size = 0 mm, inner sep = 0 mm, outer sep = 0 mm] (e2) {$e_2$};
        \node[box, below=5mm of mult_e2] (normal_y2) {$\mathcal{N}$};
        \node[blackbox, left=4mm of normal_y2] (box_tau2) {};
        \node[left=1mm of box_tau2, minimum size = 0 mm, inner sep = 0 mm, outer sep = 0 mm] (tau2) {$\tau_2$};
        \node[blackbox, below=5mm of normal_y2] (box_yt2) {};
        \node[below=1.5mm of box_yt2, minimum size = 0 mm, inner sep = 0 mm, outer sep = 0 mm] (yt2) {$y_{i,2}$};
        
        % block yt3
        \node[smallbox, right=25mm of eq_my2] (eq_my3) {$=$};
        \node[smallbox, below=5mm of eq_my3] (mult_e3) {$\times$};
        \node[blackbox, left=5mm of mult_e3] (box_e3) {};
        \node[left=1mm of box_e3, minimum size = 0 mm, inner sep = 0 mm, outer sep = 0 mm] (e1) {$e_3$};
        \node[box, below=5mm of mult_e3] (normal_y3) {$\mathcal{N}$};
        \node[blackbox, left=4mm of normal_y3] (box_tau3) {};
        \node[left=1mm of box_tau3, minimum size = 0 mm, inner sep = 0 mm, outer sep = 0 mm] (tau3) {$\tau_3$};
        \node[blackbox, below=5mm of normal_y3] (box_yt3) {};
        \node[below=1.5mm of box_yt3, minimum size = 0 mm, inner sep = 0 mm, outer sep = 0 mm] (yt3) {$y_{i,3}$};

        % message circles (D=3 partial-obs example: y_{i,1} observed, y_{i,2} y_{i,3} unobserved)
        \node[right=3mm of normal_y1] (anc_1) {};
        \node[draw, circle, minimum size=7mm, below=1.5mm of anc_1] (mcircle1) {$1$};
        \node[left=-0.5mm of mcircle1] {$\uparrow$};
        \node[draw, circle, minimum size=7mm, above=1.25mm of anc_1] (mcircle2) {$2$};
        \node[left=-0.5mm of mcircle2] {$\uparrow$};
        \node[draw, circle, minimum size=7mm, above=11.5mm of anc_1] (mcircle3) {$3$};
        \node[left=-0.5mm of mcircle3] {$\uparrow$};
        \node[draw, circle, minimum size=7mm, above=21.5mm of anc_1] (mcircle4) {$4$};
        \node[left=-0.5mm of mcircle4] {$\uparrow$};

        \node[right=3mm of normal_y2] (anc_2) {};
        \node[left=5mm of eq_my2] (anc_5) {};
        \node[draw, dashed, circle, minimum size=7mm, below=1.5mm of anc_2] (mcircle8) {$8$};
        \node[left=-0.5mm of mcircle8] {$\uparrow$};
        \node[draw, dashed, circle, minimum size=7mm, above=1.0mm of anc_2] (mcircle7) {$7$};
        \node[left=-0.5mm of mcircle7] {$\uparrow$};
        \node[draw, dashed, circle, minimum size=7mm, above=11.5mm of anc_2] (mcircle6) {$6$};
        \node[left=-0.5mm of mcircle6] {$\uparrow$};
        \node[draw, dashed, circle, minimum size=7mm, above=2.5mm of anc_5] (mcircle5) {$5$};
        \node[below=0mm of mcircle5] {$\leftarrow$};

        \node[right=3mm of normal_y3] (anc_3) {};
        \node[draw, dashed, circle, minimum size=7mm, below=1.4mm of anc_3] (mcircle11) {$11$};
        \node[left=-0.5mm of mcircle11] {$\uparrow$};
        \node[draw, dashed, circle, minimum size=7mm, above=0.5mm of anc_3] (mcircle10) {$10$};
        \node[left=-0.5mm of mcircle10] {$\uparrow$};
        \node[draw, dashed, circle, minimum size=7mm, above=11.5mm of anc_3] (mcircle9) {$9$};
        \node[left=-0.5mm of mcircle9] {$\uparrow$};
        
        % anchors for big box
        \node[left=2mm of at] (anc_l) {};
        \node[above=2mm of anc_l] (anc_lt) {};
        \node[right=2mm of yt3] (anc_r) {};
        \node[below=2mm of anc_r] (anc_rb) {};

    % edges
        % lines in and out
        \draw (anc_prev) -- (x)node[pos=0, below=1mm] {$z_{i-1}$};
        \draw (anc_hprev) -- (eq_h) {};
        \draw (eq_x) -- (anc_next)node[pos=.1, below=1mm] {$z_i$};
        \draw (eq_h) -- (anc_hnext) {};
        \draw (normal_x0) -- (anc_prev) {};
        \draw (box_h0) -- (anc_hprev) {};

        % top part
        \draw (box_at) -- (x) {};
        \draw (box_qt) -- (normal_x) {};
        \draw (x) -- (normal_x) node[pos=0, above=0mm] {};
        \draw (normal_x) -- (eq_x) node[pos=0, above=0mm] {};
        \draw (eq_x) -- (mult_my1) {};
        \draw (eq_h) |- (mult_my1) {};

        % observation 1
        \draw (mult_my1) -- (eq_my1) {};
        \draw (eq_my1) -- (mult_e1) {};
        \draw (box_e1) -- (mult_e1) {};
        \draw (mult_e1) -- (normal_y1) {};
        \draw (box_tau1) -- (normal_y1) {};
        \draw (normal_y1) -- (box_yt1) {};

        % observation 2
        \draw (eq_my1) -- (eq_my2) {};
        \draw (eq_my2) -- (mult_e2) {};
        \draw (box_e2) -- (mult_e2) {};
        \draw (mult_e2) -- (normal_y2) {};
        \draw (box_tau2) -- (normal_y2) {};
        \draw (normal_y2) -- (box_yt2) {};

        % observation 3
        \draw (eq_my2) -- (eq_my3) {};
        \draw (eq_my3) -- (mult_e3) {};
        \draw (box_e3) -- (mult_e3) {};
        \draw (mult_e3) -- (normal_y3) {};
        \draw (box_tau3) -- (normal_y3) {};
        \draw (normal_y3) -- (box_yt3) {};

        % big box
        \node[draw, dashed, fit=(anc_lt)(anc_rb)(mcircle11), inner ysep=0pt, inner xsep=4pt] {};
        
\end{tikzpicture}}
  \caption{Example Forney-style factor graph of SS-LMC, for $D=3$. The dashed box indicates a plate repeated over all points in the chain. Messages are carried upwards and joined into a total observation message $4$, which updates the chain. If $y_{i,2}$ and $y_{i,3}$ are unobserved, the dashed messages ($5$--$11$) are absent and do not contribute.}
  \label{fig:ffg_example}
  \vspace{-10pt}
\end{figure*}
The full generative model is expressed as a factor graph for message passing (see \cref{fig:ffg_example}):
\begin{align}\label{eq:generative_model}
   & p(z_0) \! =\! \mathcal{N}(z_0 | 0, P)  ,  \ \ p(z_i | z_{i-1}) \! = \! \mathcal{N}(z_i \given A_i z_{i-1}, Q_i)  , \ \ p(y_{id} \given z_i) \! = \! \mathcal{N}(y_{id} \given e_d^{\top} Hz_i, \tau_d^{-1}) ,
\end{align}
where $e_d$ is a canonical basis vector.
Each output at each chain position is an independent scalar observation factor.
The observation factor for output $d$ at chain position $i$ is included in the joint only when $O_{i,d} = 1$, and entries with $O_{i,d} = 0$ contribute no factor so that predictions flow through the graph unaffected.

\subsection{Inference} \label{sec:inference}

Because all noise precisions $\tau_d$ are fixed scalars, the model is fully linear-Gaussian.
Inference is a single forward-backward pass (exact Kalman smoother), automated by RxInfer.jl.
After the candidate chain has been constructed, the per-step inference cost is $\bigO(C(DL^2 + L^3))$ for fixed Mat\'ern state dimension.
The block-diagonal structure of $A_i$ and $Q_i$ keeps the latent dynamics sparse, but the LMC observation matrix $H$ couples the $L$ latent processes, so exact inference maintains dense Gaussian beliefs over the joint $2L$-dimensional state.
The $D$ per-output scalar observation factors contribute $\bigO(DL^2)$ work per chain position, and the dense Gaussian state updates contribute the $\bigO(L^3)$ term.
% This cost is constant throughout the BO episode, since $C$, $D$, and $L$ are all fixed at setup.
With partial observations, the effective $D$ at each chain position may be smaller, reducing the cost further. We detail this modularity next.

% \subsubsection{Partial Observations via Modularity}
% \label{sec:partial_obs}
\paragraph{Partial Observations via Modularity}

In multi-output settings it is common for different outputs to be observed at different inputs.
In sensor networks, for instance, sensors at different locations measure different quantities at different times \citep{liu_eventbased_2015}.
The mask $O$ of \cref{sec:problem} encodes exactly this pattern, with $O_{i,d} = 0$ marking unobserved entries.

The factor graph formulation handles arbitrary masks natively.
Each potential observation $y_{i,d}$ corresponds to an independent scalar factor, and the local update at chain position $i$ only fires the factors for which $O_{i,d} = 1$.
No covariance-matrix restructuring is needed and no code path changes, so the same model and the same inference call produce posteriors regardless of $O$.
The local message count drops when $O$ has zeros, although fixed overheads may dominate at moderate $D$.
\Cref{fig:ffg_example} visualizes this for $O_{i,2} = O_{i,3} = 0$, where the dashed messages ($5$--$11$) are absent and the subgraph drops from $11$ messages to $4$.

This contrasts with dense kernel-matrix GP baselines, where partial observations require explicitly constructing or slicing a variable-size $P \times P$ kernel matrix from the $P$ observed output--point pairs. The representational model is the same, but the dense covariance implementation has cost $\bigO(P^3)$ coupled to the number of observed entries.

\section{Experiments}
\label{sec:evaluation}

We compare the proposed state-space LMC (SS-LMC) against an exact kernel-matrix LMC (KM-LMC), a sparse-variational inducing-point LMC (SVGP-LMC), and a nearest-neighbor LMC (NNGP-LMC) in two studies: an input-dimension sweep on a synthetic sensor network benchmark that probes the chain approximation (\cref{sec:dim_sweep}), and a real-data forecasting task on ETTh1 that probes the cost of partial-observation updating (\cref{sec:ett}).
All methods use identical LMC model structure: the same mixing matrix~$W$, length-scales~$\ell_l$, output scales~$\gamma^2_l$, and fixed diagonal noise.
The comparison isolates the effect of the inference method and candidate-chain approximation under controlled hyperparameters, reported through held-out posterior-quality metrics and wall-clock time.

RxInfer.jl is used for a reactive message passing implementation (RMP)\footnote[1]{Experiment code available at \url{https://github.com/biaslab/PGM_2026_SSMOGP}} \citep{bagaev_rxinfer_2023}. All timings compare CPU implementations in the same pipeline on a MacBook Pro with Apple M1 Pro CPU (8 cores) and 32\,GB RAM.
% Prior work has applied this framework to time-varying autoregressive models \citep{podusenko_aida_2022} and model selection \citep{vanerp_automating_2023}.
% The present work extends this line by using RxInfer.jl's factor graph infrastructure for Bayesian optimization, demonstrating that the surrogate model and predictions can be handled by a single declarative probabilistic program with compositional support for partial observations.
% This declarative specification contrasts with hand-coded Kalman filter implementations and requirs only local modifications to the graph rather than changes to the inference algorithm.
\subsection{Accuracy as a Function of Input Dimensionality}
\label{sec:dim_sweep}

We probe the nearest-neighbor chain approximation directly by sweeping the input dimension $M$ on a synthetic sensor network benchmark. Each of the $M$ input coordinates is one weather station at a fixed location, and the $D{=}3$ outputs are spatially weighted, saturating combinations of the per-station readings, so increasing $M$ enlarges the input geometry that the one-dimensional chain must compress rather than merely duplicating information. We draw a candidate set of size $C{=}2000$, observe a uniformly random $50\%$ as training data and hold out the remaining $50\%$ for evaluation, and fit four methods with identical LMC structure ($L{=}2$, $\ell_l \in \{1.0, 2.0\}$, $\gamma^2_l \in \{2.0, 1.0\}$, fixed diagonal noise): the proposed SS-LMC, an exact kernel-matrix KM-LMC, a sparse-variational SVGP-LMC with $64$ inducing points per latent, and an NNGP-LMC conditioning each position on its $k{=}20$ nearest locations. For each method we report held-out RMSE and mean negative log-likelihood (MNLL) of the held-out observations, alongside the wall-clock runtime. We sweep $M \in \{2, 4, 8, 16, 32\}$ over $100$ seeds and report mean $\pm$ standard deviation.

\cref{app:chain_gap} reports the nearest-neighbor chain stretch $\Delta$, i.e., the consecutive inter-point distance between chain-ordered candidates, along with the difference in predictive performance between SS-LMC and KM-LMC.

\paragraph{Results.}
\Cref{fig:dim_sweep} reports held-out accuracy as $M$ grows, now alongside the exact kernel-matrix posterior, the NNGP-LMC and the SVGP-LMC baseline. At low input dimension the chain preserves the local geometry (mean $\Delta \approx 0.10$ at $M{=}2$) and SS-LMC closely tracks the exact KM-LMC posterior: held-out RMSE is $0.061$ versus $0.058$ at $M{=}2$ and $0.055$ versus $0.046$ at $M{=}4$, with SVGP-LMC in the same range. As $M$ increases the chain stretches (mean $\Delta$ rises to $5.6$ and max $\Delta$ to $11.2$ at $M{=}32$) and every method's error grows, since at fixed $C$ the regression problem itself becomes harder. The two approximations degrade gracefully rather than catastrophically: at $M{=}32$ SS-LMC reaches RMSE $0.531$ and SVGP-LMC $0.495$, against $0.417$ for exact KM-LMC and $0.524$ for NNGP-LMC, and the SS-LMC--KM-LMC RMSE gap stays $\le$ $0.12$ throughout. SS-LMC's MNLL lies above the exact baseline across the range and stays close to SVGP-LMC's ($1.29$ against $1.26$ at $M{=}32$). NNGP-LMC shows the sharpest dependence on input dimension: it is indistinguishable from the exact posterior at $M{=}2$ (RMSE $0.0584$ for both) and close at $M{=}4$, where $k{=}20$ neighboring locations capture nearly all the relevant correlation, but by $M{=}32$ its MNLL is the worst of the four ($1.50$). Cost separates the methods far more sharply than accuracy does: one smoothing sweep takes $0.015$--$0.017$\,s and is flat in $M$, as $\bigO(C(DL^2+L^3))$ predicts, against $1.27$--$1.36$\,s for exact KM-LMC, $0.07$--$0.17$\,s for SVGP-LMC and $0.41$--$0.51$\,s for NNGP-LMC.

\begin{figure*}[tb]
  \centering
  \resizebox{.32\textwidth}{!}{\input{figures/dim_sweep/rmse_vs_d.tikz}}
  \resizebox{.32\textwidth}{!}{\input{figures/dim_sweep/mnll_vs_d.tikz}}
  \resizebox{.32\textwidth}{!}{\input{figures/dim_sweep/time_vs_d.tikz}}
  \caption{Input-dimension sweep on the synthetic sensor network ($C{=}2000$, $D{=}3$, $L{=}2$, $50/50$ train/test, $100$ seeds; mean $\pm$ std).}
  \label{fig:dim_sweep}
  \vspace{-10pt}
\end{figure*}

\subsection{Cost of Partial Observation Updating}
\label{sec:ett}

We forecast the ETTh1 dataset \citep{zhou_informer_2021} in a multidimensional-input regime that exercises the chain approximation on real data. We split the seven channels into an $M{=}3$ input (the useful-load measurements HUFL, MUFL, LUFL) and $D{=}4$ correlated outputs (the useless-load channels HULL, MULL, LULL and oil temperature OT), and order the inputs jointly with the nearest-neighbor chain so that SS-LMC performs $\bigO(N)$ message passing on three-dimensional inputs. We take a contiguous block of $2N$ rows, train on the first $N$ with a training mask $O_{i,d} \sim \mathrm{Bernoulli}(1-p)$ drawn i.i.d.\ across $(i,d)$, and forecast the fully held-out next $N$ rows. We compare four methods with identical LMC structure ($L{=}3$, $\ell_l \in \{0.5, 1, 2\}$, $\gamma^2_l{=}1$, fixed diagonal noise $R{=}0.2$): the proposed SS-LMC by reactive message passing, an exact KM-LMC by covariance restructuring, a sparse-variational SVGP-LMC with $64$ inducing points per latent, and an NNGP-LMC \citep{datta_hierarchical_2016,katzfuss_general_2021} conditioning each position on its $k{=}20$ nearest \emph{locations} ($kD{=}80$ responses, matched to SVGP-LMC's budget). KM-LMC forms the full structured LMC kernel over the observed (input, output) pairs and refactorizes. Because this dense $(DN)$-scale factorization is memory-bound, we run it only up to $N{=}2000$. SS-LMC simply omits the local observation factor of each missing entry, and SVGP-LMC sums per-datum evidence terms over the observed entries. We report held-out forecast MNLL and RMSE and fit+forecast wall-clock time, sweeping the window length $N \in \{500, 1000, 2000, 4000, 8000\}$ at $p{=}0.3$ and the dropout rate $p \in \{0.1, 0.3, 0.5, 0.7\}$ at $N{=}2000$, over $10$ seeds (mean). The complementary $M{=}1$ temporal case, where the natural ordering makes the SS-LMC and KM-LMC posteriors coincide exactly so the comparison is purely computational, is reported in \cref{app:ett_m1}.

\paragraph{Results.}
\Cref{fig:ETT} reports the four methods across window length and dropout. Because the input is now three-dimensional, the chain is an approximation and SS-LMC no longer coincides with KM-LMC, but its forecast accuracy stays close: at $N{=}2000$ the held-out RMSE is $8.17$ for SS-LMC against $7.98$ for exact KM-LMC, $8.10$ for SVGP-LMC and $7.98$ for NNGP-LMC, and at $N{=}8000$ (beyond KM-LMC's reach) SS-LMC reaches RMSE $5.15$ against $4.95$ for SVGP-LMC and $5.19$ for NNGP-LMC, with SVGP-LMC holding a small MNLL advantage throughout and SS-LMC ahead of KM-LMC and NNGP-LMC on MNLL. The cost, however, separates the methods sharply. KM-LMC's covariance restructuring scales cubically ($0.62 \to 2.53 \to 13.2$\,s for $N{=}500 \to 2000$) and is infeasible beyond $N{=}2000$, whereas SS-LMC's single forward--backward sweep grows linearly and reaches $0.16$\,s at $N{=}8000$, $7.6\times$ below the inducing-point SVGP-LMC ($1.25$\,s) and $38\times$ below NNGP-LMC ($6.16$\,s).
The dropout sweep shows the complementary effect: SS-LMC's cost is essentially invariant to the missingness rate (${\sim}0.036$\,s across $p$), since missing entries merely omit local factors, while KM-LMC's cost falls with sparser data ($18.5 \to 5.7$\,s as $p{:}\,0.1 \to 0.7$). SVGP-LMC stays near $0.3$\,s and NNGP-LMC from $1.33$ to $0.61$\,s. \Cref{app:pareto} recasts both views as an accuracy--cost frontier at fixed window length.
Handling partial observations therefore requires no covariance-matrix restructuring in SS-LMC, and the chain approximation costs little accuracy relative to an exact, a sparse-variational and a nearest-neighbor LMC baseline.

\begin{figure*}[tb]
  \centering
  \resizebox{.32\textwidth}{!}{% Recommended preamble:
% \usetikzlibrary{arrows.meta}
% \usetikzlibrary{backgrounds}
% \usepgfplotslibrary{patchplots}
% \usepgfplotslibrary{fillbetween}
% \pgfplotsset{%
%     layers/standard/.define layer set={%
%         background,axis background,axis grid,axis ticks,axis lines,axis tick labels,pre main,main,axis descriptions,axis foreground%
%     }{
%         grid style={/pgfplots/on layer=axis grid},%
%         tick style={/pgfplots/on layer=axis ticks},%
%         axis line style={/pgfplots/on layer=axis lines},%
%         label style={/pgfplots/on layer=axis descriptions},%
%         legend style={/pgfplots/on layer=axis descriptions},%
%         title style={/pgfplots/on layer=axis descriptions},%
%         colorbar style={/pgfplots/on layer=axis descriptions},%
%         ticklabel style={/pgfplots/on layer=axis tick labels},%
%         axis background@ style={/pgfplots/on layer=axis background},%
%         3d box foreground style={/pgfplots/on layer=axis foreground},%
%     },
% }

\begin{tikzpicture}[/tikz/background rectangle/.style={fill={rgb,1:red,1.0;green,1.0;blue,1.0}, fill opacity={1.0}, draw opacity={1.0}}, show background rectangle]
\begin{axis}[point meta max={nan}, point meta min={nan}, legend cell align={left}, legend columns={1}, title={}, title style={at={{(0.5,1)}}, anchor={south}, font={{\fontsize{26 pt}{33.800000000000004 pt}\selectfont}}, color={rgb,1:red,0.0;green,0.0;blue,0.0}, draw opacity={1.0}, rotate={0.0}, align={center}}, legend style={color={rgb,1:red,0.0;green,0.0;blue,0.0}, draw opacity={1.0}, line width={1}, solid, fill={rgb,1:red,1.0;green,1.0;blue,1.0}, fill opacity={1.0}, text opacity={1.0}, font={{\fontsize{20 pt}{26.0 pt}\selectfont}}, text={rgb,1:red,0.0;green,0.0;blue,0.0}, cells={anchor={center}}, at={(1.02, 1)}, anchor={north west}}, axis background/.style={fill={rgb,1:red,1.0;green,1.0;blue,1.0}, opacity={1.0}}, anchor={north west}, xshift={8.0mm}, yshift={-1.0mm}, width={133.24mm}, height={89.52mm}, scaled x ticks={false}, xlabel={Window length C (p=0.3)}, x tick style={color={rgb,1:red,0.0;green,0.0;blue,0.0}, opacity={1.0}}, x tick label style={color={rgb,1:red,0.0;green,0.0;blue,0.0}, opacity={1.0}, rotate={0}}, xlabel style={at={(ticklabel cs:0.5)}, anchor=near ticklabel, at={{(ticklabel cs:0.5)}}, anchor={near ticklabel}, font={{\fontsize{28 pt}{36.4 pt}\selectfont}}, color={rgb,1:red,0.0;green,0.0;blue,0.0}, draw opacity={1.0}, rotate={0.0}}, xmajorgrids={true}, xmin={275.0}, xmax={8225.0}, xticklabels={{$2000$,$4000$,$6000$,$8000$}}, xtick={{2000.0,4000.0,6000.0,8000.0}}, xtick align={inside}, xticklabel style={font={{\fontsize{24 pt}{31.200000000000003 pt}\selectfont}}, color={rgb,1:red,0.0;green,0.0;blue,0.0}, draw opacity={1.0}, rotate={0.0}}, x grid style={color={rgb,1:red,0.0;green,0.0;blue,0.0}, draw opacity={0.1}, line width={0.5}, solid}, axis x line*={left}, x axis line style={color={rgb,1:red,0.0;green,0.0;blue,0.0}, draw opacity={1.0}, line width={1}, solid}, scaled y ticks={false}, ylabel={Fit+forecast time (s)}, y tick style={color={rgb,1:red,0.0;green,0.0;blue,0.0}, opacity={1.0}}, y tick label style={color={rgb,1:red,0.0;green,0.0;blue,0.0}, opacity={1.0}, rotate={0}}, ylabel style={at={(ticklabel cs:0.5)}, anchor=near ticklabel, at={{(ticklabel cs:0.5)}}, anchor={near ticklabel}, font={{\fontsize{28 pt}{36.4 pt}\selectfont}}, color={rgb,1:red,0.0;green,0.0;blue,0.0}, draw opacity={1.0}, rotate={0.0}}, ymode={log}, log basis y={10}, ymajorgrids={true}, ymin={0.0059348444375000005}, ymax={18.483549739340003}, yticklabels={{0.01,0.1,1,10}}, ytick={{0.01,0.1,1.0,10.0}}, ytick align={inside}, yticklabel style={font={{\fontsize{24 pt}{31.200000000000003 pt}\selectfont}}, color={rgb,1:red,0.0;green,0.0;blue,0.0}, draw opacity={1.0}, rotate={0.0}}, y grid style={color={rgb,1:red,0.0;green,0.0;blue,0.0}, draw opacity={0.1}, line width={0.5}, solid}, axis y line*={left}, y axis line style={color={rgb,1:red,0.0;green,0.0;blue,0.0}, draw opacity={1.0}, line width={1}, solid}, colorbar={false}]
    \addplot[color={rgb,1:red,0.0;green,0.0;blue,1.0}, name path={1}, draw opacity={1.0}, line width={2}, solid, mark={*}, mark size={3.0 pt}, mark repeat={1}, mark options={color={rgb,1:red,0.0;green,0.0;blue,0.0}, draw opacity={1.0}, fill={rgb,1:red,0.0;green,0.0;blue,1.0}, fill opacity={1.0}, line width={0.75}, rotate={0}, solid}]
        table[row sep={\\}]
        {
            \\
            500.0  0.009495751100000002  \\
            1000.0  0.022885803000000003  \\
            2000.0  0.0360167659  \\
            4000.0  0.0832467479  \\
            8000.0  0.16315844170000002  \\
        }
        ;
    \addplot[color={rgb,1:red,1.0;green,0.0;blue,0.0}, name path={2}, draw opacity={1.0}, line width={2}, solid, mark={*}, mark size={3.0 pt}, mark repeat={1}, mark options={color={rgb,1:red,0.0;green,0.0;blue,0.0}, draw opacity={1.0}, fill={rgb,1:red,1.0;green,0.0;blue,0.0}, fill opacity={1.0}, line width={0.75}, rotate={0}, solid}]
        table[row sep={\\}]
        {
            \\
            500.0  0.6199823077  \\
            1000.0  2.5283786599000004  \\
            2000.0  13.202535528100004  \\
        }
        ;
    \addplot[color={rgb,1:red,0.0;green,0.502;blue,0.0}, name path={3}, draw opacity={1.0}, line width={2}, solid, mark={*}, mark size={3.0 pt}, mark repeat={1}, mark options={color={rgb,1:red,0.0;green,0.0;blue,0.0}, draw opacity={1.0}, fill={rgb,1:red,0.0;green,0.502;blue,0.0}, fill opacity={1.0}, line width={0.75}, rotate={0}, solid}]
        table[row sep={\\}]
        {
            \\
            500.0  0.0759865605  \\
            1000.0  0.1355278819  \\
            2000.0  0.3015992828  \\
            4000.0  0.5672922373  \\
            8000.0  1.2482629603000002  \\
        }
        ;
    \addplot[color={rgb,1:red,0.502;green,0.0;blue,0.502}, name path={4}, draw opacity={1.0}, line width={2}, solid, mark={*}, mark size={3.0 pt}, mark repeat={1}, mark options={color={rgb,1:red,0.0;green,0.0;blue,0.0}, draw opacity={1.0}, fill={rgb,1:red,0.502;green,0.0;blue,0.502}, fill opacity={1.0}, line width={0.75}, rotate={0}, solid}]
        table[row sep={\\}]
        {
            \\
            500.0  0.22503575329999997  \\
            1000.0  0.6812491268  \\
            2000.0  0.8605454709  \\
            4000.0  2.3808842453  \\
            8000.0  6.1557580854000005  \\
        }
        ;
\end{axis}
\end{tikzpicture}}
  \resizebox{.32\textwidth}{!}{% Recommended preamble:
% \usetikzlibrary{arrows.meta}
% \usetikzlibrary{backgrounds}
% \usepgfplotslibrary{patchplots}
% \usepgfplotslibrary{fillbetween}
% \pgfplotsset{%
%     layers/standard/.define layer set={%
%         background,axis background,axis grid,axis ticks,axis lines,axis tick labels,pre main,main,axis descriptions,axis foreground%
%     }{
%         grid style={/pgfplots/on layer=axis grid},%
%         tick style={/pgfplots/on layer=axis ticks},%
%         axis line style={/pgfplots/on layer=axis lines},%
%         label style={/pgfplots/on layer=axis descriptions},%
%         legend style={/pgfplots/on layer=axis descriptions},%
%         title style={/pgfplots/on layer=axis descriptions},%
%         colorbar style={/pgfplots/on layer=axis descriptions},%
%         ticklabel style={/pgfplots/on layer=axis tick labels},%
%         axis background@ style={/pgfplots/on layer=axis background},%
%         3d box foreground style={/pgfplots/on layer=axis foreground},%
%     },
% }

\begin{tikzpicture}[/tikz/background rectangle/.style={fill={rgb,1:red,1.0;green,1.0;blue,1.0}, fill opacity={1.0}, draw opacity={1.0}}, show background rectangle]
\begin{axis}[point meta max={nan}, point meta min={nan}, legend cell align={left}, legend columns={1}, title={}, title style={at={{(0.5,1)}}, anchor={south}, font={{\fontsize{26 pt}{33.800000000000004 pt}\selectfont}}, color={rgb,1:red,0.0;green,0.0;blue,0.0}, draw opacity={1.0}, rotate={0.0}, align={center}}, legend style={color={rgb,1:red,0.0;green,0.0;blue,0.0}, draw opacity={1.0}, line width={1}, solid, fill={rgb,1:red,1.0;green,1.0;blue,1.0}, fill opacity={1.0}, text opacity={1.0}, font={{\fontsize{20 pt}{26.0 pt}\selectfont}}, text={rgb,1:red,0.0;green,0.0;blue,0.0}, cells={anchor={center}}, at={(0.98, 0.98)}, anchor={north east}}, axis background/.style={fill={rgb,1:red,1.0;green,1.0;blue,1.0}, opacity={1.0}}, anchor={north west}, xshift={8.0mm}, yshift={-1.0mm}, width={133.24mm}, height={89.52mm}, scaled x ticks={false}, xlabel={Window length C (p=0.3)}, x tick style={color={rgb,1:red,0.0;green,0.0;blue,0.0}, opacity={1.0}}, x tick label style={color={rgb,1:red,0.0;green,0.0;blue,0.0}, opacity={1.0}, rotate={0}}, xlabel style={at={(ticklabel cs:0.5)}, anchor=near ticklabel, at={{(ticklabel cs:0.5)}}, anchor={near ticklabel}, font={{\fontsize{28 pt}{36.4 pt}\selectfont}}, color={rgb,1:red,0.0;green,0.0;blue,0.0}, draw opacity={1.0}, rotate={0.0}}, xmajorgrids={true}, xmin={275.0}, xmax={8225.0}, xticklabels={{$2000$,$4000$,$6000$,$8000$}}, xtick={{2000.0,4000.0,6000.0,8000.0}}, xtick align={inside}, xticklabel style={font={{\fontsize{24 pt}{31.200000000000003 pt}\selectfont}}, color={rgb,1:red,0.0;green,0.0;blue,0.0}, draw opacity={1.0}, rotate={0.0}}, x grid style={color={rgb,1:red,0.0;green,0.0;blue,0.0}, draw opacity={0.1}, line width={0.5}, solid}, axis x line*={left}, x axis line style={color={rgb,1:red,0.0;green,0.0;blue,0.0}, draw opacity={1.0}, line width={1}, solid}, scaled y ticks={false}, ylabel={Forecast MNLL}, y tick style={color={rgb,1:red,0.0;green,0.0;blue,0.0}, opacity={1.0}}, y tick label style={color={rgb,1:red,0.0;green,0.0;blue,0.0}, opacity={1.0}, rotate={0}}, ylabel style={at={(ticklabel cs:0.5)}, anchor=near ticklabel, at={{(ticklabel cs:0.5)}}, anchor={near ticklabel}, font={{\fontsize{28 pt}{36.4 pt}\selectfont}}, color={rgb,1:red,0.0;green,0.0;blue,0.0}, draw opacity={1.0}, rotate={0.0}}, ymajorgrids={true}, ymin={-2.5848866027035533}, ymax={23.34194682164489}, yticklabels={{$0$,$5$,$10$,$15$,$20$}}, ytick={{0.0,5.0,10.0,15.0,20.0}}, ytick align={inside}, yticklabel style={font={{\fontsize{24 pt}{31.200000000000003 pt}\selectfont}}, color={rgb,1:red,0.0;green,0.0;blue,0.0}, draw opacity={1.0}, rotate={0.0}}, y grid style={color={rgb,1:red,0.0;green,0.0;blue,0.0}, draw opacity={0.1}, line width={0.5}, solid}, axis y line*={left}, y axis line style={color={rgb,1:red,0.0;green,0.0;blue,0.0}, draw opacity={1.0}, line width={1}, solid}, colorbar={false}]
    \addplot[color={rgb,1:red,0.0;green,0.0;blue,1.0}, name path={5}, draw opacity={1.0}, line width={2}, solid, mark={*}, mark size={3.0 pt}, mark repeat={1}, mark options={color={rgb,1:red,0.0;green,0.0;blue,0.0}, draw opacity={1.0}, fill={rgb,1:red,0.0;green,0.0;blue,1.0}, fill opacity={1.0}, line width={0.75}, rotate={0}, solid}]
        table[row sep={\\}]
        {
            \\
            500.0  4.333310848697207  \\
            1000.0  3.754669444766513  \\
            2000.0  11.27769661559979  \\
            4000.0  7.1689132078253595  \\
            8000.0  3.275974060557284  \\
        }
        ;
    \addlegendentry {SS-LMC (ours)}
    \addplot[color={rgb,1:red,1.0;green,0.0;blue,0.0}, name path={6}, draw opacity={1.0}, line width={2}, solid, mark={*}, mark size={3.0 pt}, mark repeat={1}, mark options={color={rgb,1:red,0.0;green,0.0;blue,0.0}, draw opacity={1.0}, fill={rgb,1:red,1.0;green,0.0;blue,0.0}, fill opacity={1.0}, line width={0.75}, rotate={0}, solid}]
        table[row sep={\\}]
        {
            \\
            500.0  4.906096374115297  \\
            1000.0  4.1780049270785105  \\
            2000.0  13.288684881591411  \\
        }
        ;
    \addlegendentry {KM-LMC}
    \addplot[color={rgb,1:red,0.0;green,0.502;blue,0.0}, name path={7}, draw opacity={1.0}, line width={2}, solid, mark={*}, mark size={3.0 pt}, mark repeat={1}, mark options={color={rgb,1:red,0.0;green,0.0;blue,0.0}, draw opacity={1.0}, fill={rgb,1:red,0.0;green,0.502;blue,0.0}, fill opacity={1.0}, line width={0.75}, rotate={0}, solid}]
        table[row sep={\\}]
        {
            \\
            500.0  3.9336522129307285  \\
            1000.0  3.3929800893077733  \\
            2000.0  9.214455350434468  \\
            4000.0  5.368011404082851  \\
            8000.0  2.706303892061435  \\
        }
        ;
    \addlegendentry {SVGP-LMC}
    \addplot[color={rgb,1:red,0.502;green,0.0;blue,0.502}, name path={8}, draw opacity={1.0}, line width={2}, solid, mark={*}, mark size={3.0 pt}, mark repeat={1}, mark options={color={rgb,1:red,0.0;green,0.0;blue,0.0}, draw opacity={1.0}, fill={rgb,1:red,0.502;green,0.0;blue,0.502}, fill opacity={1.0}, line width={0.75}, rotate={0}, solid}]
        table[row sep={\\}]
        {
            \\
            500.0  4.905741659367152  \\
            1000.0  4.1732580038645395  \\
            2000.0  13.2219783714655  \\
            4000.0  8.504451675290465  \\
            8000.0  3.5664627977145975  \\
        }
        ;
    \addlegendentry {NNGP-LMC}
\end{axis}
\end{tikzpicture}}
  \resizebox{.32\textwidth}{!}{% Recommended preamble:
% \usetikzlibrary{arrows.meta}
% \usetikzlibrary{backgrounds}
% \usepgfplotslibrary{patchplots}
% \usepgfplotslibrary{fillbetween}
% \pgfplotsset{%
%     layers/standard/.define layer set={%
%         background,axis background,axis grid,axis ticks,axis lines,axis tick labels,pre main,main,axis descriptions,axis foreground%
%     }{
%         grid style={/pgfplots/on layer=axis grid},%
%         tick style={/pgfplots/on layer=axis ticks},%
%         axis line style={/pgfplots/on layer=axis lines},%
%         label style={/pgfplots/on layer=axis descriptions},%
%         legend style={/pgfplots/on layer=axis descriptions},%
%         title style={/pgfplots/on layer=axis descriptions},%
%         colorbar style={/pgfplots/on layer=axis descriptions},%
%         ticklabel style={/pgfplots/on layer=axis tick labels},%
%         axis background@ style={/pgfplots/on layer=axis background},%
%         3d box foreground style={/pgfplots/on layer=axis foreground},%
%     },
% }

\begin{tikzpicture}[/tikz/background rectangle/.style={fill={rgb,1:red,1.0;green,1.0;blue,1.0}, fill opacity={1.0}, draw opacity={1.0}}, show background rectangle]
\begin{axis}[point meta max={nan}, point meta min={nan}, legend cell align={left}, legend columns={1}, title={}, title style={at={{(0.5,1)}}, anchor={south}, font={{\fontsize{26 pt}{33.800000000000004 pt}\selectfont}}, color={rgb,1:red,0.0;green,0.0;blue,0.0}, draw opacity={1.0}, rotate={0.0}, align={center}}, legend style={color={rgb,1:red,0.0;green,0.0;blue,0.0}, draw opacity={1.0}, line width={1}, solid, fill={rgb,1:red,1.0;green,1.0;blue,1.0}, fill opacity={1.0}, text opacity={1.0}, font={{\fontsize{20 pt}{26.0 pt}\selectfont}}, text={rgb,1:red,0.0;green,0.0;blue,0.0}, cells={anchor={center}}, at={(1.02, 1)}, anchor={north west}}, axis background/.style={fill={rgb,1:red,1.0;green,1.0;blue,1.0}, opacity={1.0}}, anchor={north west}, xshift={8.0mm}, yshift={-1.0mm}, width={133.24mm}, height={89.52mm}, scaled x ticks={false}, xlabel={Dropout p (C=2000)}, x tick style={color={rgb,1:red,0.0;green,0.0;blue,0.0}, opacity={1.0}}, x tick label style={color={rgb,1:red,0.0;green,0.0;blue,0.0}, opacity={1.0}, rotate={0}}, xlabel style={at={(ticklabel cs:0.5)}, anchor=near ticklabel, at={{(ticklabel cs:0.5)}}, anchor={near ticklabel}, font={{\fontsize{28 pt}{36.4 pt}\selectfont}}, color={rgb,1:red,0.0;green,0.0;blue,0.0}, draw opacity={1.0}, rotate={0.0}}, xmajorgrids={true}, xmin={0.08199999999999996}, xmax={0.718}, xticklabels={{$0.1$,$0.2$,$0.3$,$0.4$,$0.5$,$0.6$,$0.7$}}, xtick={{0.1,0.2,0.30000000000000004,0.4,0.5,0.6000000000000001,0.7000000000000001}}, xtick align={inside}, xticklabel style={font={{\fontsize{24 pt}{31.200000000000003 pt}\selectfont}}, color={rgb,1:red,0.0;green,0.0;blue,0.0}, draw opacity={1.0}, rotate={0.0}}, x grid style={color={rgb,1:red,0.0;green,0.0;blue,0.0}, draw opacity={0.1}, line width={0.5}, solid}, axis x line*={left}, x axis line style={color={rgb,1:red,0.0;green,0.0;blue,0.0}, draw opacity={1.0}, line width={1}, solid}, scaled y ticks={false}, ylabel={Fit+forecast time (s)}, y tick style={color={rgb,1:red,0.0;green,0.0;blue,0.0}, opacity={1.0}}, y tick label style={color={rgb,1:red,0.0;green,0.0;blue,0.0}, opacity={1.0}, rotate={0}}, ylabel style={at={(ticklabel cs:0.5)}, anchor=near ticklabel, at={{(ticklabel cs:0.5)}}, anchor={near ticklabel}, font={{\fontsize{28 pt}{36.4 pt}\selectfont}}, color={rgb,1:red,0.0;green,0.0;blue,0.0}, draw opacity={1.0}, rotate={0.0}}, ymode={log}, log basis y={10}, ymajorgrids={true}, ymin={0.021478289124999994}, ymax={25.929812578319996}, yticklabels={{0.1,1,10}}, ytick={{0.1,1.0,10.0}}, ytick align={inside}, yticklabel style={font={{\fontsize{24 pt}{31.200000000000003 pt}\selectfont}}, color={rgb,1:red,0.0;green,0.0;blue,0.0}, draw opacity={1.0}, rotate={0.0}}, y grid style={color={rgb,1:red,0.0;green,0.0;blue,0.0}, draw opacity={0.1}, line width={0.5}, solid}, axis y line*={left}, y axis line style={color={rgb,1:red,0.0;green,0.0;blue,0.0}, draw opacity={1.0}, line width={1}, solid}, colorbar={false}]
    \addplot[color={rgb,1:red,0.0;green,0.0;blue,1.0}, name path={13}, draw opacity={1.0}, line width={2}, solid, mark={*}, mark size={3.0 pt}, mark repeat={1}, mark options={color={rgb,1:red,0.0;green,0.0;blue,0.0}, draw opacity={1.0}, fill={rgb,1:red,0.0;green,0.0;blue,1.0}, fill opacity={1.0}, line width={0.75}, rotate={0}, solid}]
        table[row sep={\\}]
        {
            \\
            0.1  0.0355883626  \\
            0.3  0.0357941503  \\
            0.5  0.0438301901  \\
            0.7  0.034365262599999995  \\
        }
        ;
    \addplot[color={rgb,1:red,1.0;green,0.0;blue,0.0}, name path={14}, draw opacity={1.0}, line width={2}, solid, mark={*}, mark size={3.0 pt}, mark repeat={1}, mark options={color={rgb,1:red,0.0;green,0.0;blue,0.0}, draw opacity={1.0}, fill={rgb,1:red,1.0;green,0.0;blue,0.0}, fill opacity={1.0}, line width={0.75}, rotate={0}, solid}]
        table[row sep={\\}]
        {
            \\
            0.1  18.5212946988  \\
            0.3  13.5688483843  \\
            0.5  8.793470715000002  \\
            0.7  5.658021343  \\
        }
        ;
    \addplot[color={rgb,1:red,0.0;green,0.502;blue,0.0}, name path={15}, draw opacity={1.0}, line width={2}, solid, mark={*}, mark size={3.0 pt}, mark repeat={1}, mark options={color={rgb,1:red,0.0;green,0.0;blue,0.0}, draw opacity={1.0}, fill={rgb,1:red,0.0;green,0.502;blue,0.0}, fill opacity={1.0}, line width={0.75}, rotate={0}, solid}]
        table[row sep={\\}]
        {
            \\
            0.1  0.36188260629999996  \\
            0.3  0.31280264290000004  \\
            0.5  0.21963262339999998  \\
            0.7  0.21023393500000004  \\
        }
        ;
    \addplot[color={rgb,1:red,0.502;green,0.0;blue,0.502}, name path={16}, draw opacity={1.0}, line width={2}, solid, mark={*}, mark size={3.0 pt}, mark repeat={1}, mark options={color={rgb,1:red,0.0;green,0.0;blue,0.0}, draw opacity={1.0}, fill={rgb,1:red,0.502;green,0.0;blue,0.502}, fill opacity={1.0}, line width={0.75}, rotate={0}, solid}]
        table[row sep={\\}]
        {
            \\
            0.1  1.3320356123  \\
            0.3  0.8948118230000001  \\
            0.5  0.6685880291999998  \\
            0.7  0.6112387879  \\
        }
        ;
\end{axis}
\end{tikzpicture}}
  \caption{ETTh1 forecasting at $M{=}3$, $D{=}4$, $L{=}3$, $10$ seeds. KM-LMC runs only to $N{=}2000$. Timings exclude the one-off chain construction (cf.\ \cref{sec:discussion}).}
  \label{fig:ETT}
  \vspace{-10pt}
\end{figure*}

\section{Discussion}
\label{sec:discussion}
\begin{table*}[t]
  \centering
  \caption{Per-step computational cost of the baselines and the proposed SS-LMC. $m$: SVGP inducing points per latent. $k$: NNGP conditioning locations, each carrying $D$ outputs. $P = \sum_{i,d} O_{i,d}$: observed output--input pairs.}
  \label{tab:complexity}
  \resizebox{\textwidth}{!}{%
    \begin{tabular}{|l | c | c | c | c | c |}
      \toprule
                           & Naive MOGP       & KM-LMC                          & SVGP-LMC               & NNGP-LMC                        & SS-LMC (ours)                             \\
      \midrule
      Per-step inference   & $\bigO((DN)^3)$  & $\bigO(LN^3 + N D L^2)$         & $\bigO(L(Nm^2 + m^3))$ & $\bigO(N(kD)^3)$                & $\bigO(C(DL^2 + L^3))$                    \\
      Chain construction   & ---              & ---                             & ---                    & $\bigO(C^2)$ brute force        & $\bigO(C^2)$ brute force                  \\
      Partial observations & $\bigO((DN)^3)$  & $\bigO(P^3)$                    & $\bigO(L(Pm^2 + m^3))$ & $\bigO(N(kD)^3)$                & $\bigO(C(DL^2 + L^3))$                    \\
      Total over $N$ steps & $\bigO(D^3 N^4)$ & $\bigO(LN^4 \! + \! N^2 D L^2)$ & $\bigO(L N m^2)$       & $\bigO(C^2 \! + \! N^2 (kD)^3)$ & $\bigO(C^2 \! + \! NC(DL^2 \! + \! L^3))$ \\
      \bottomrule
    \end{tabular}%
  }
  \vspace{-5pt}
\end{table*}

SS-LMC has $\bigO(C(DL^2+L^3))$ per-step inference after chain construction (\cref{tab:complexity}). NNGP-LMC \citep{datta_hierarchical_2016,katzfuss_general_2021} shares the nearest-neighbor ingredient but conditions each prediction on $k$ neighbor \emph{locations} instead of propagating one state along a chain, and pays the same $\bigO(C^2)$ brute-force construction \citep[\S S1]{finley_efficient_2019}.
Chain construction is preprocessing that the kernel-matrix and inducing-point baselines avoid, but it depends only on the candidate inputs, and not on $Y$, the mask $O$, or the hyperparameters. So it is built once and reused by every later inference call. Writing $c_\pi$ for its cost, $R$ reuses favor SS-LMC once $R > c_\pi / (t_{\mathrm{base}} - t_{\mathrm{SS}})$. Since the one-off chain cost is well below the per-fit gap to either baseline, the construction is recovered within a single fit, and repeated queries, hyperparameter-learning iterations, and streaming updates amortize it further.
The nearest-neighbor chain replaces Euclidean by path distances along a one-dimensional Markov chain, and the input-dimension sweep (\cref{fig:dim_sweep}) shows the resulting trade-off: fidelity is high at low $M$ and degrades gradually as the chain stretches. The method is therefore strongest for low-input-dimensional candidate sets.

\citet{nguyen_factor_2025} also decomposes a Gaussian process into a factor graph formulation, but utilizes variational approximations instead of the state-space formulation. It is a factor graph treatment of the sparse variational Gaussian process, and has the same computational and memory complexity at similar accuracy in regression and classification tasks.

%The same factor-graph compositionality that absorbs missing observations applies to other local extensions --- online noise learning via conjugate priors, input-dependent noise, non-stationary kernels --- by modifying local factors only; we leave such extensions to future work.

\paragraph{Hyperparameter Learning.}
Learning the hyperparameters $(W, \ell_l, \gamma^2_l)$, which are fixed throughout this work, interacts asymmetrically with the graph. The mixing matrix enters only through $H$ and so touches the $C$ observation blocks alone, whereas $\ell_l$ and $\gamma^2_l$ enter through $A_i$ and $Q_i$ and re-parameterize every transition factor: unlike the missingness mask, a hyperparameter update is not a local graph edit. The ordering is spared, since $\pi$ and the $\Delta_i$ depend only on the input geometry and are computed once and reused across iterations. Expectation maximization is then natural, alternating one $\bigO(C(DL^2+L^3))$ smoothing sweep with automatic-differentiation gradient steps on the hyperparameters, taking the free energy returned by that same sweep as the objective, with banded linear algebra for the gradients \citep{durrande_banded_2019}; particle MCMC \citep{svensson_computationally_2016} and recursive Bayesian autoregression \citep{kouw_bayesian_2025} are alternatives. Learned length-scales are moreover fit to chain rather than Euclidean distances, so they absorb part of the chain stretch and inflate $\alpha_l$, and with it the bound of \cref{thm:lmc_posterior_perturbation}.

\paragraph{Limitations.}
Firstly, the greedy nearest-neighbor chain is path-dependent. However, the effect is, so far, empirically negligible: held-out RMSE varies by ${<}0.1\%$ across $20$ starting points (within the $\mathcal{O}(\log C)$ bound of \cref{prop:chain_start}), and even adversarially diverse starts leave it unchanged (\cref{app:chain_start}). Secondly, the ETT study (\cref{sec:ett}) exercises an $M{=}3$ input on real data, but a systematic evaluation at higher input dimension on real data is left to future work. Thirdly, the bounds of \cref{lem:chain_kernel_bound,thm:lmc_posterior_perturbation} are illustrative from a theoretical perspective but do not furnish error estimates one would use in practice. Lastly, the method is currently limited to Mat\'ern-class covariances that admit state-space representations.

\section{Conclusion}
\label{sec:conclusion}

We presented low-input-dimensional multi-output GP inference as message passing on a Forney-style factor graph, combining state-space GPs, LMC, and reactive message passing.
On a synthetic sensor-network benchmark it tracks the exact kernel-matrix posterior at low input dimension and degrades gracefully as the chain stretches with input dimension.
On real ETTh1 forecasting under random per-(row, output) dropout with a three-dimensional input, SS-LMC matches both baselines in forecast accuracy to within a small margin, while its inference cost grows only linearly in the window length and is invariant to the dropout rate.
The factor graph handles partial observations without covariance-matrix restructuring.

\section*{Acknowledgments}
This publication is part of the ROBUST project with project number KICH3.LTP.20.006, which is (partly) financed by the Dutch Research Council (NWO), GN Hearing, and the Dutch Ministry of Economic Affairs and Climate Policy (EZK) under the program LTP KIC 2020-2023. This project is also partly financed by Holland High
Tech with PPS funding for the AUTO-AR project RVO TKI2112P09.

% Bibliography
% PMLR uses author-year citations by default
% Use \citet{} for textual citations: "Smith and Jones (2023) showed..."
% Use \citep{} for parenthetical citations: "as shown previously (Smith and Jones, 2023)"

% You can use BibTeX (recommended)
% For BibTeX, use: 
\bibliography{references}

\appendix
\crefalias{section}{appendix}

\section{Chain Stretch and the Gap to the Exact Posterior} \label{app:chain_gap}

\Cref{fig:chain_gap} reports the geometry the chain has to compress and what it costs against the exact posterior. The stretch $\Delta$ grows steadily with $M$ (mean $0.10 \to 5.6$, max $4.5 \to 11.2$ for $M{:}\,2 \to 32$), and the accuracy gap follows it. The difference between SS-LMC and KM-LMC within each seed rises monotonically in RMSE, from $0.003 \pm 0.002$ at $M{=}2$ to $0.114 \pm 0.035$ at $M{=}32$. The MNLL gap rises likewise, from $0.08 \pm 0.08$ to a maximum of $0.37 \pm 0.29$ at $M{=}16$, then falls back to $0.20 \pm 0.23$ at $M{=}32$, where the regression problem is hard enough at fixed $C$ that the exact posterior loses part of its advantage too.

\begin{figure*}[htb]
  \centering
  \resizebox{.32\textwidth}{!}{% Recommended preamble:
% \usetikzlibrary{arrows.meta}
% \usetikzlibrary{backgrounds}
% \usepgfplotslibrary{patchplots}
% \usepgfplotslibrary{fillbetween}
% \pgfplotsset{%
%     layers/standard/.define layer set={%
%         background,axis background,axis grid,axis ticks,axis lines,axis tick labels,pre main,main,axis descriptions,axis foreground%
%     }{
%         grid style={/pgfplots/on layer=axis grid},%
%         tick style={/pgfplots/on layer=axis ticks},%
%         axis line style={/pgfplots/on layer=axis lines},%
%         label style={/pgfplots/on layer=axis descriptions},%
%         legend style={/pgfplots/on layer=axis descriptions},%
%         title style={/pgfplots/on layer=axis descriptions},%
%         colorbar style={/pgfplots/on layer=axis descriptions},%
%         ticklabel style={/pgfplots/on layer=axis tick labels},%
%         axis background@ style={/pgfplots/on layer=axis background},%
%         3d box foreground style={/pgfplots/on layer=axis foreground},%
%     },
% }

\begin{tikzpicture}[/tikz/background rectangle/.style={fill={rgb,1:red,1.0;green,1.0;blue,1.0}, fill opacity={1.0}, draw opacity={1.0}}, show background rectangle]
\begin{axis}[point meta max={nan}, point meta min={nan}, legend cell align={left}, legend columns={1}, title={}, title style={at={{(0.5,1)}}, anchor={south}, font={{\fontsize{26 pt}{33.800000000000004 pt}\selectfont}}, color={rgb,1:red,0.0;green,0.0;blue,0.0}, draw opacity={1.0}, rotate={0.0}, align={center}}, legend style={color={rgb,1:red,0.0;green,0.0;blue,0.0}, draw opacity={1.0}, line width={1}, solid, fill={rgb,1:red,1.0;green,1.0;blue,1.0}, fill opacity={1.0}, text opacity={1.0}, font={{\fontsize{20 pt}{26.0 pt}\selectfont}}, text={rgb,1:red,0.0;green,0.0;blue,0.0}, cells={anchor={center}}, at={(0.02, 0.98)}, anchor={north west}}, axis background/.style={fill={rgb,1:red,1.0;green,1.0;blue,1.0}, opacity={1.0}}, anchor={north west}, xshift={8.0mm}, yshift={-1.0mm}, width={133.24mm}, height={89.52mm}, scaled x ticks={false}, xlabel={Input dimension M}, x tick style={color={rgb,1:red,0.0;green,0.0;blue,0.0}, opacity={1.0}}, x tick label style={color={rgb,1:red,0.0;green,0.0;blue,0.0}, opacity={1.0}, rotate={0}}, xlabel style={at={(ticklabel cs:0.5)}, anchor=near ticklabel, at={{(ticklabel cs:0.5)}}, anchor={near ticklabel}, font={{\fontsize{28 pt}{36.4 pt}\selectfont}}, color={rgb,1:red,0.0;green,0.0;blue,0.0}, draw opacity={1.0}, rotate={0.0}}, xmode={log}, log basis x={2}, xmajorgrids={true}, xmin={1.8403753012497501}, xmax={34.77551560083386}, xticklabels={{$2^{2}$,$2^{4}$}}, xtick={{4.0,16.0}}, xtick align={inside}, xticklabel style={font={{\fontsize{24 pt}{31.200000000000003 pt}\selectfont}}, color={rgb,1:red,0.0;green,0.0;blue,0.0}, draw opacity={1.0}, rotate={0.0}}, x grid style={color={rgb,1:red,0.0;green,0.0;blue,0.0}, draw opacity={0.1}, line width={0.5}, solid}, axis x line*={left}, x axis line style={color={rgb,1:red,0.0;green,0.0;blue,0.0}, draw opacity={1.0}, line width={1}, solid}, scaled y ticks={false}, ylabel={Chain $\Delta$}, y tick style={color={rgb,1:red,0.0;green,0.0;blue,0.0}, opacity={1.0}}, y tick label style={color={rgb,1:red,0.0;green,0.0;blue,0.0}, opacity={1.0}, rotate={0}}, ylabel style={at={(ticklabel cs:0.5)}, anchor=near ticklabel, at={{(ticklabel cs:0.5)}}, anchor={near ticklabel}, font={{\fontsize{28 pt}{36.4 pt}\selectfont}}, color={rgb,1:red,0.0;green,0.0;blue,0.0}, draw opacity={1.0}, rotate={0.0}}, ymajorgrids={true}, ymin={-0.2618077729397701}, ymax={12.306314798473851}, yticklabels={{$0.0$,$2.5$,$5.0$,$7.5$,$10.0$}}, ytick={{0.0,2.5,5.0,7.5,10.0}}, ytick align={inside}, yticklabel style={font={{\fontsize{24 pt}{31.200000000000003 pt}\selectfont}}, color={rgb,1:red,0.0;green,0.0;blue,0.0}, draw opacity={1.0}, rotate={0.0}}, y grid style={color={rgb,1:red,0.0;green,0.0;blue,0.0}, draw opacity={0.1}, line width={0.5}, solid}, axis y line*={left}, y axis line style={color={rgb,1:red,0.0;green,0.0;blue,0.0}, draw opacity={1.0}, line width={1}, solid}, colorbar={false}]
    \addplot+[line width={0}, draw opacity={0}, fill={rgb,1:red,0.502;green,0.0;blue,0.502}, fill opacity={0.15}, mark={none}, forget plot]
        coordinates {
            (2.0,0.09575604648383114)
            (4.0,0.5269876074404318)
            (8.0,1.5451885745758214)
            (16.0,3.216155949025804)
            (32.0,5.63199669238257)
            (32.0,5.625025413698436)
            (16.0,3.209377310542418)
            (8.0,1.538527914745694)
            (4.0,0.5232429762011946)
            (2.0,0.0938938092700498)
            (2.0,0.09575604648383114)
        }
        ;
    \addplot+[line width={0}, draw opacity={0}, fill={rgb,1:red,0.502;green,0.0;blue,0.502}, fill opacity={0.15}, mark={none}, forget plot]
        coordinates {
            (2.0,0.09575604648383114)
            (4.0,0.5269876074404318)
            (8.0,1.5451885745758214)
            (16.0,3.216155949025804)
            (32.0,5.63199669238257)
            (32.0,5.6389679710667036)
            (16.0,3.2229345875091897)
            (8.0,1.5518492344059487)
            (4.0,0.530732238679669)
            (2.0,0.09761828369761248)
            (2.0,0.09575604648383114)
        }
        ;
    \addplot[color={rgb,1:red,0.502;green,0.0;blue,0.502}, name path={17}, legend image code/.code={{
    \draw[fill={rgb,1:red,0.502;green,0.0;blue,0.502}, fill opacity={0.15}] (0cm,-0.1cm) rectangle (0.6cm,0.1cm);
    }}, draw opacity={1.0}, line width={2}, solid, mark={*}, mark size={3.0 pt}, mark repeat={1}, mark options={color={rgb,1:red,0.0;green,0.0;blue,0.0}, draw opacity={1.0}, fill={rgb,1:red,0.502;green,0.0;blue,0.502}, fill opacity={1.0}, line width={0.75}, rotate={0}, solid}]
        table[row sep={\\}]
        {
            \\
            2.0  0.09575604648383114  \\
            4.0  0.5269876074404318  \\
            8.0  1.5451885745758214  \\
            16.0  3.216155949025804  \\
            32.0  5.63199669238257  \\
        }
        ;
    \addlegendentry {mean $\Delta$}
    \addplot+[line width={0}, draw opacity={0}, fill={rgb,1:red,1.0;green,0.6471;blue,0.0}, fill opacity={0.15}, mark={none}, forget plot]
        coordinates {
            (2.0,4.481140194904158)
            (4.0,5.421210124096172)
            (8.0,6.826017176765849)
            (16.0,8.53918893179165)
            (32.0,11.21320469674185)
            (32.0,10.475796177219667)
            (16.0,7.816189968721643)
            (8.0,5.891628590678088)
            (4.0,4.3751370143332)
            (2.0,3.1108201597496574)
            (2.0,4.481140194904158)
        }
        ;
    \addplot+[line width={0}, draw opacity={0}, fill={rgb,1:red,1.0;green,0.6471;blue,0.0}, fill opacity={0.15}, mark={none}, forget plot]
        coordinates {
            (2.0,4.481140194904158)
            (4.0,5.421210124096172)
            (8.0,6.826017176765849)
            (16.0,8.53918893179165)
            (32.0,11.21320469674185)
            (32.0,11.950613216264031)
            (16.0,9.262187894861656)
            (8.0,7.760405762853611)
            (4.0,6.467283233859144)
            (2.0,5.851460230058658)
            (2.0,4.481140194904158)
        }
        ;
    \addplot[color={rgb,1:red,1.0;green,0.6471;blue,0.0}, name path={18}, legend image code/.code={{
    \draw[fill={rgb,1:red,1.0;green,0.6471;blue,0.0}, fill opacity={0.15}] (0cm,-0.1cm) rectangle (0.6cm,0.1cm);
    }}, draw opacity={1.0}, line width={2}, dashed, mark={diamond*}, mark size={3.0 pt}, mark repeat={1}, mark options={color={rgb,1:red,0.0;green,0.0;blue,0.0}, draw opacity={1.0}, fill={rgb,1:red,1.0;green,0.6471;blue,0.0}, fill opacity={1.0}, line width={0.75}, rotate={0}, solid}]
        table[row sep={\\}]
        {
            \\
            2.0  4.481140194904158  \\
            4.0  5.421210124096172  \\
            8.0  6.826017176765849  \\
            16.0  8.53918893179165  \\
            32.0  11.21320469674185  \\
        }
        ;
    \addlegendentry {max $\Delta$}
\end{axis}
\end{tikzpicture}}
  \resizebox{.32\textwidth}{!}{% Recommended preamble:
% \usetikzlibrary{arrows.meta}
% \usetikzlibrary{backgrounds}
% \usepgfplotslibrary{patchplots}
% \usepgfplotslibrary{fillbetween}
% \pgfplotsset{%
%     layers/standard/.define layer set={%
%         background,axis background,axis grid,axis ticks,axis lines,axis tick labels,pre main,main,axis descriptions,axis foreground%
%     }{
%         grid style={/pgfplots/on layer=axis grid},%
%         tick style={/pgfplots/on layer=axis ticks},%
%         axis line style={/pgfplots/on layer=axis lines},%
%         label style={/pgfplots/on layer=axis descriptions},%
%         legend style={/pgfplots/on layer=axis descriptions},%
%         title style={/pgfplots/on layer=axis descriptions},%
%         colorbar style={/pgfplots/on layer=axis descriptions},%
%         ticklabel style={/pgfplots/on layer=axis tick labels},%
%         axis background@ style={/pgfplots/on layer=axis background},%
%         3d box foreground style={/pgfplots/on layer=axis foreground},%
%     },
% }

\begin{tikzpicture}[/tikz/background rectangle/.style={fill={rgb,1:red,1.0;green,1.0;blue,1.0}, fill opacity={1.0}, draw opacity={1.0}}, show background rectangle]
\begin{axis}[point meta max={nan}, point meta min={nan}, legend cell align={left}, legend columns={1}, title={}, title style={at={{(0.5,1)}}, anchor={south}, font={{\fontsize{26 pt}{33.800000000000004 pt}\selectfont}}, color={rgb,1:red,0.0;green,0.0;blue,0.0}, draw opacity={1.0}, rotate={0.0}, align={center}}, legend style={color={rgb,1:red,0.0;green,0.0;blue,0.0}, draw opacity={1.0}, line width={1}, solid, fill={rgb,1:red,1.0;green,1.0;blue,1.0}, fill opacity={1.0}, text opacity={1.0}, font={{\fontsize{20 pt}{26.0 pt}\selectfont}}, text={rgb,1:red,0.0;green,0.0;blue,0.0}, cells={anchor={center}}, at={(1.02, 1)}, anchor={north west}}, axis background/.style={fill={rgb,1:red,1.0;green,1.0;blue,1.0}, opacity={1.0}}, anchor={north west}, xshift={8.0mm}, yshift={-1.0mm}, width={133.24mm}, height={89.52mm}, scaled x ticks={false}, xlabel={Input dimension M}, x tick style={color={rgb,1:red,0.0;green,0.0;blue,0.0}, opacity={1.0}}, x tick label style={color={rgb,1:red,0.0;green,0.0;blue,0.0}, opacity={1.0}, rotate={0}}, xlabel style={at={(ticklabel cs:0.5)}, anchor=near ticklabel, at={{(ticklabel cs:0.5)}}, anchor={near ticklabel}, font={{\fontsize{28 pt}{36.4 pt}\selectfont}}, color={rgb,1:red,0.0;green,0.0;blue,0.0}, draw opacity={1.0}, rotate={0.0}}, xmode={log}, log basis x={2}, xmajorgrids={true}, xmin={0.90125046261083}, xmax={35.50622310617105}, xticklabels={{$2^{0}$,$2^{2}$,$2^{4}$}}, xtick={{1.0,4.0,16.0}}, xtick align={inside}, xticklabel style={font={{\fontsize{24 pt}{31.200000000000003 pt}\selectfont}}, color={rgb,1:red,0.0;green,0.0;blue,0.0}, draw opacity={1.0}, rotate={0.0}}, x grid style={color={rgb,1:red,0.0;green,0.0;blue,0.0}, draw opacity={0.1}, line width={0.5}, solid}, axis x line*={left}, x axis line style={color={rgb,1:red,0.0;green,0.0;blue,0.0}, draw opacity={1.0}, line width={1}, solid}, scaled y ticks={false}, ylabel={RMSE gap}, y tick style={color={rgb,1:red,0.0;green,0.0;blue,0.0}, opacity={1.0}}, y tick label style={color={rgb,1:red,0.0;green,0.0;blue,0.0}, opacity={1.0}, rotate={0}}, ylabel style={at={(ticklabel cs:0.5)}, anchor=near ticklabel, at={{(ticklabel cs:0.5)}}, anchor={near ticklabel}, font={{\fontsize{28 pt}{36.4 pt}\selectfont}}, color={rgb,1:red,0.0;green,0.0;blue,0.0}, draw opacity={1.0}, rotate={0.0}}, ymajorgrids={true}, ymin={-0.004474252190357464}, ymax={0.15361599186893954}, yticklabels={{$0.00$,$0.05$,$0.10$,$0.15$}}, ytick={{0.0,0.05000000000000001,0.10000000000000002,0.15000000000000002}}, ytick align={inside}, yticklabel style={font={{\fontsize{24 pt}{31.200000000000003 pt}\selectfont}}, color={rgb,1:red,0.0;green,0.0;blue,0.0}, draw opacity={1.0}, rotate={0.0}}, y grid style={color={rgb,1:red,0.0;green,0.0;blue,0.0}, draw opacity={0.1}, line width={0.5}, solid}, axis y line*={left}, y axis line style={color={rgb,1:red,0.0;green,0.0;blue,0.0}, draw opacity={1.0}, line width={1}, solid}, colorbar={false}]
    \addplot[color={rgb,1:red,0.502;green,0.502;blue,0.502}, name path={13}, draw opacity={1.0}, line width={1}, dashed]
        table[row sep={\\}]
        {
            \\
            0.02287633899915038  0.0  \\
            1398.825222916502  0.0  \\
        }
        ;
    \addplot+[line width={0}, draw opacity={0}, fill={rgb,1:red,0.0;green,0.0;blue,1.0}, fill opacity={0.15}, mark={none}, forget plot]
        coordinates {
            (2.0,0.003025485447655671)
            (4.0,0.008898244848943946)
            (8.0,0.06339105130205483)
            (16.0,0.10283648232130839)
            (32.0,0.1137080640600252)
            (32.0,0.07827438844146832)
            (16.0,0.0685086117518614)
            (8.0,0.03847484519293731)
            (4.0,0.004782784080888547)
            (2.0,0.0010083756574806443)
            (2.0,0.003025485447655671)
        }
        ;
    \addplot+[line width={0}, draw opacity={0}, fill={rgb,1:red,0.0;green,0.0;blue,1.0}, fill opacity={0.15}, mark={none}, forget plot]
        coordinates {
            (2.0,0.003025485447655671)
            (4.0,0.008898244848943946)
            (8.0,0.06339105130205483)
            (16.0,0.10283648232130839)
            (32.0,0.1137080640600252)
            (32.0,0.14914173967858207)
            (16.0,0.13716435289075538)
            (8.0,0.08830725741117235)
            (4.0,0.013013705616999344)
            (2.0,0.005042595237830697)
            (2.0,0.003025485447655671)
        }
        ;
    \addplot[color={rgb,1:red,0.0;green,0.0;blue,1.0}, name path={14}, draw opacity={1.0}, line width={2}, solid, mark={*}, mark size={3.0 pt}, mark repeat={1}, mark options={color={rgb,1:red,0.0;green,0.0;blue,0.0}, draw opacity={1.0}, fill={rgb,1:red,0.0;green,0.0;blue,1.0}, fill opacity={1.0}, line width={0.75}, rotate={0}, solid}]
        table[row sep={\\}]
        {
            \\
            2.0  0.003025485447655671  \\
            4.0  0.008898244848943946  \\
            8.0  0.06339105130205483  \\
            16.0  0.10283648232130839  \\
            32.0  0.1137080640600252  \\
        }
        ;
\end{axis}
\end{tikzpicture}}
  \resizebox{.32\textwidth}{!}{% Recommended preamble:
% \usetikzlibrary{arrows.meta}
% \usetikzlibrary{backgrounds}
% \usepgfplotslibrary{patchplots}
% \usepgfplotslibrary{fillbetween}
% \pgfplotsset{%
%     layers/standard/.define layer set={%
%         background,axis background,axis grid,axis ticks,axis lines,axis tick labels,pre main,main,axis descriptions,axis foreground%
%     }{
%         grid style={/pgfplots/on layer=axis grid},%
%         tick style={/pgfplots/on layer=axis ticks},%
%         axis line style={/pgfplots/on layer=axis lines},%
%         label style={/pgfplots/on layer=axis descriptions},%
%         legend style={/pgfplots/on layer=axis descriptions},%
%         title style={/pgfplots/on layer=axis descriptions},%
%         colorbar style={/pgfplots/on layer=axis descriptions},%
%         ticklabel style={/pgfplots/on layer=axis tick labels},%
%         axis background@ style={/pgfplots/on layer=axis background},%
%         3d box foreground style={/pgfplots/on layer=axis foreground},%
%     },
% }

\begin{tikzpicture}[/tikz/background rectangle/.style={fill={rgb,1:red,1.0;green,1.0;blue,1.0}, fill opacity={1.0}, draw opacity={1.0}}, show background rectangle]
\begin{axis}[point meta max={nan}, point meta min={nan}, legend cell align={left}, legend columns={1}, title={}, title style={at={{(0.5,1)}}, anchor={south}, font={{\fontsize{26 pt}{33.800000000000004 pt}\selectfont}}, color={rgb,1:red,0.0;green,0.0;blue,0.0}, draw opacity={1.0}, rotate={0.0}, align={center}}, legend style={color={rgb,1:red,0.0;green,0.0;blue,0.0}, draw opacity={1.0}, line width={1}, solid, fill={rgb,1:red,1.0;green,1.0;blue,1.0}, fill opacity={1.0}, text opacity={1.0}, font={{\fontsize{20 pt}{26.0 pt}\selectfont}}, text={rgb,1:red,0.0;green,0.0;blue,0.0}, cells={anchor={center}}, at={(1.02, 1)}, anchor={north west}}, axis background/.style={fill={rgb,1:red,1.0;green,1.0;blue,1.0}, opacity={1.0}}, anchor={north west}, xshift={8.0mm}, yshift={-1.0mm}, width={133.24mm}, height={89.52mm}, scaled x ticks={false}, xlabel={Input dimension M}, x tick style={color={rgb,1:red,0.0;green,0.0;blue,0.0}, opacity={1.0}}, x tick label style={color={rgb,1:red,0.0;green,0.0;blue,0.0}, opacity={1.0}, rotate={0}}, xlabel style={at={(ticklabel cs:0.5)}, anchor=near ticklabel, at={{(ticklabel cs:0.5)}}, anchor={near ticklabel}, font={{\fontsize{28 pt}{36.4 pt}\selectfont}}, color={rgb,1:red,0.0;green,0.0;blue,0.0}, draw opacity={1.0}, rotate={0.0}}, xmode={log}, log basis x={2}, xmajorgrids={true}, xmin={0.90125046261083}, xmax={35.50622310617105}, xticklabels={{$2^{0}$,$2^{2}$,$2^{4}$}}, xtick={{1.0,4.0,16.0}}, xtick align={inside}, xticklabel style={font={{\fontsize{24 pt}{31.200000000000003 pt}\selectfont}}, color={rgb,1:red,0.0;green,0.0;blue,0.0}, draw opacity={1.0}, rotate={0.0}}, x grid style={color={rgb,1:red,0.0;green,0.0;blue,0.0}, draw opacity={0.1}, line width={0.5}, solid}, axis x line*={left}, x axis line style={color={rgb,1:red,0.0;green,0.0;blue,0.0}, draw opacity={1.0}, line width={1}, solid}, scaled y ticks={false}, ylabel={MNLL gap}, y tick style={color={rgb,1:red,0.0;green,0.0;blue,0.0}, opacity={1.0}}, y tick label style={color={rgb,1:red,0.0;green,0.0;blue,0.0}, opacity={1.0}, rotate={0}}, ylabel style={at={(ticklabel cs:0.5)}, anchor=near ticklabel, at={{(ticklabel cs:0.5)}}, anchor={near ticklabel}, font={{\fontsize{28 pt}{36.4 pt}\selectfont}}, color={rgb,1:red,0.0;green,0.0;blue,0.0}, draw opacity={1.0}, rotate={0.0}}, ymajorgrids={true}, ymin={-0.04622885063855359}, ymax={0.6941106963183583}, yticklabels={{$0.0$,$0.1$,$0.2$,$0.3$,$0.4$,$0.5$,$0.6$}}, ytick={{0.0,0.1,0.2,0.30000000000000004,0.4,0.5,0.6000000000000001}}, ytick align={inside}, yticklabel style={font={{\fontsize{24 pt}{31.200000000000003 pt}\selectfont}}, color={rgb,1:red,0.0;green,0.0;blue,0.0}, draw opacity={1.0}, rotate={0.0}}, y grid style={color={rgb,1:red,0.0;green,0.0;blue,0.0}, draw opacity={0.1}, line width={0.5}, solid}, axis y line*={left}, y axis line style={color={rgb,1:red,0.0;green,0.0;blue,0.0}, draw opacity={1.0}, line width={1}, solid}, colorbar={false}]
    \addplot[color={rgb,1:red,0.502;green,0.502;blue,0.502}, name path={15}, draw opacity={1.0}, line width={1}, dashed]
        table[row sep={\\}]
        {
            \\
            0.02287633899915038  0.0  \\
            1398.825222916502  0.0  \\
        }
        ;
    \addplot+[line width={0}, draw opacity={0}, fill={rgb,1:red,0.0;green,0.0;blue,1.0}, fill opacity={0.15}, mark={none}, forget plot]
        coordinates {
            (2.0,0.07825979277818418)
            (4.0,0.2411908446985143)
            (8.0,0.3602877697570974)
            (16.0,0.37404746849803333)
            (32.0,0.20103196457192066)
            (32.0,-0.025275844592603208)
            (16.0,0.08600034901273051)
            (8.0,0.047417849241786936)
            (4.0,0.08757697703480091)
            (2.0,-0.005146261807565541)
            (2.0,0.07825979277818418)
        }
        ;
    \addplot+[line width={0}, draw opacity={0}, fill={rgb,1:red,0.0;green,0.0;blue,1.0}, fill opacity={0.15}, mark={none}, forget plot]
        coordinates {
            (2.0,0.07825979277818418)
            (4.0,0.2411908446985143)
            (8.0,0.3602877697570974)
            (16.0,0.37404746849803333)
            (32.0,0.20103196457192066)
            (32.0,0.4273397737364445)
            (16.0,0.6620945879833362)
            (8.0,0.6731576902724079)
            (4.0,0.3948047123622277)
            (2.0,0.1616658473639339)
            (2.0,0.07825979277818418)
        }
        ;
    \addplot[color={rgb,1:red,0.0;green,0.0;blue,1.0}, name path={16}, draw opacity={1.0}, line width={2}, solid, mark={*}, mark size={3.0 pt}, mark repeat={1}, mark options={color={rgb,1:red,0.0;green,0.0;blue,0.0}, draw opacity={1.0}, fill={rgb,1:red,0.0;green,0.0;blue,1.0}, fill opacity={1.0}, line width={0.75}, rotate={0}, solid}]
        table[row sep={\\}]
        {
            \\
            2.0  0.07825979277818418  \\
            4.0  0.2411908446985143  \\
            8.0  0.3602877697570974  \\
            16.0  0.37404746849803333  \\
            32.0  0.20103196457192066  \\
        }
        ;
\end{axis}
\end{tikzpicture}}
  \caption{Chain stretch and approximation gap on the sensor network sweep ($C{=}2000$, $D{=}3$, $L{=}2$, $100$ seeds). Left: nearest-neighbor chain stretch $\Delta$ (mean, max) vs.\ input dim $M$. Middle and right: held-out RMSE and MNLL gap vs.\ input dim $M$, as SS-LMC minus exact KM-LMC. Difference is taken within each seed and then averaged (mean $\pm$ std).}
  \label{fig:chain_gap}
\end{figure*}

\section{ETT Temporal (\texorpdfstring{$M{=}1$}{M=1}) Case} \label{app:ett_m1}

As a complement to the multidimensional-input study of \cref{sec:ett}, we also forecast ETTh1 in the purely temporal setting: all seven channels are modeled jointly as $D{=}7$ correlated outputs over normalized time, so $M{=}1$ and the candidate chain reduces to the natural temporal ordering. Here SS-LMC and KM-LMC realize the same model, so their posteriors coincide to numerical precision at every window length and the comparison is purely computational. \Cref{fig:ETT_m1} shows that SS-LMC's cost grows linearly in the window length while KM-LMC's covariance restructuring grows cubically, with SS-LMC reaching a $4.4\times$ speed-up at $C{=}4000$. SS-LMC's cost is also invariant to the dropout rate, whereas KM-LMC's falls with sparser data but stays tied to dense factorization. These timings come from a separate run and are comparable within the figure, not against \cref{fig:ETT}.

\begin{figure*}[htb]
  \centering
  \resizebox{.32\textwidth}{!}{% Recommended preamble:
% \usetikzlibrary{arrows.meta}
% \usetikzlibrary{backgrounds}
% \usepgfplotslibrary{patchplots}
% \usepgfplotslibrary{fillbetween}
% \pgfplotsset{%
%     layers/standard/.define layer set={%
%         background,axis background,axis grid,axis ticks,axis lines,axis tick labels,pre main,main,axis descriptions,axis foreground%
%     }{
%         grid style={/pgfplots/on layer=axis grid},%
%         tick style={/pgfplots/on layer=axis ticks},%
%         axis line style={/pgfplots/on layer=axis lines},%
%         label style={/pgfplots/on layer=axis descriptions},%
%         legend style={/pgfplots/on layer=axis descriptions},%
%         title style={/pgfplots/on layer=axis descriptions},%
%         colorbar style={/pgfplots/on layer=axis descriptions},%
%         ticklabel style={/pgfplots/on layer=axis tick labels},%
%         axis background@ style={/pgfplots/on layer=axis background},%
%         3d box foreground style={/pgfplots/on layer=axis foreground},%
%     },
% }

\begin{tikzpicture}[/tikz/background rectangle/.style={fill={rgb,1:red,1.0;green,1.0;blue,1.0}, fill opacity={1.0}, draw opacity={1.0}}, show background rectangle]
\begin{axis}[point meta max={nan}, point meta min={nan}, legend cell align={left}, legend columns={1}, title={}, title style={at={{(0.5,1)}}, anchor={south}, font={{\fontsize{26 pt}{33.800000000000004 pt}\selectfont}}, color={rgb,1:red,0.0;green,0.0;blue,0.0}, draw opacity={1.0}, rotate={0.0}, align={center}}, legend style={color={rgb,1:red,0.0;green,0.0;blue,0.0}, draw opacity={1.0}, line width={1}, solid, fill={rgb,1:red,1.0;green,1.0;blue,1.0}, fill opacity={1.0}, text opacity={1.0}, font={{\fontsize{20 pt}{26.0 pt}\selectfont}}, text={rgb,1:red,0.0;green,0.0;blue,0.0}, cells={anchor={center}}, at={(1.02, 1)}, anchor={north west}}, axis background/.style={fill={rgb,1:red,1.0;green,1.0;blue,1.0}, opacity={1.0}}, anchor={north west}, xshift={8.0mm}, yshift={-1.0mm}, width={133.24mm}, height={89.52mm}, scaled x ticks={false}, xlabel={Window length C (p=0.3)}, x tick style={color={rgb,1:red,0.0;green,0.0;blue,0.0}, opacity={1.0}}, x tick label style={color={rgb,1:red,0.0;green,0.0;blue,0.0}, opacity={1.0}, rotate={0}}, xlabel style={at={(ticklabel cs:0.5)}, anchor=near ticklabel, at={{(ticklabel cs:0.5)}}, anchor={near ticklabel}, font={{\fontsize{28 pt}{36.4 pt}\selectfont}}, color={rgb,1:red,0.0;green,0.0;blue,0.0}, draw opacity={1.0}, rotate={0.0}}, xmajorgrids={true}, xmin={395.0}, xmax={4105.0}, xticklabels={{$1000$,$2000$,$3000$,$4000$}}, xtick={{1000.0,2000.0,3000.0,4000.0}}, xtick align={inside}, xticklabel style={font={{\fontsize{24 pt}{31.200000000000003 pt}\selectfont}}, color={rgb,1:red,0.0;green,0.0;blue,0.0}, draw opacity={1.0}, rotate={0.0}}, x grid style={color={rgb,1:red,0.0;green,0.0;blue,0.0}, draw opacity={0.1}, line width={0.5}, solid}, axis x line*={left}, x axis line style={color={rgb,1:red,0.0;green,0.0;blue,0.0}, draw opacity={1.0}, line width={1}, solid}, scaled y ticks={false}, ylabel={Fit+forecast time (s)}, y tick style={color={rgb,1:red,0.0;green,0.0;blue,0.0}, opacity={1.0}}, y tick label style={color={rgb,1:red,0.0;green,0.0;blue,0.0}, opacity={1.0}, rotate={0}}, ylabel style={at={(ticklabel cs:0.5)}, anchor=near ticklabel, at={{(ticklabel cs:0.5)}}, anchor={near ticklabel}, font={{\fontsize{28 pt}{36.4 pt}\selectfont}}, color={rgb,1:red,0.0;green,0.0;blue,0.0}, draw opacity={1.0}, rotate={0.0}}, ymode={log}, log basis y={10}, ymajorgrids={true}, ymin={0.161723746875}, ymax={39.76560559346666}, yticklabels={{0.2,0.5,1,2,5,10,20}}, ytick={{0.2,0.5,1.0,2.0,5.0,10.0,20.0}}, ytick align={inside}, yticklabel style={font={{\fontsize{24 pt}{31.200000000000003 pt}\selectfont}}, color={rgb,1:red,0.0;green,0.0;blue,0.0}, draw opacity={1.0}, rotate={0.0}}, y grid style={color={rgb,1:red,0.0;green,0.0;blue,0.0}, draw opacity={0.1}, line width={0.5}, solid}, axis y line*={left}, y axis line style={color={rgb,1:red,0.0;green,0.0;blue,0.0}, draw opacity={1.0}, line width={1}, solid}, colorbar={false}]
    \addplot[color={rgb,1:red,0.0;green,0.0;blue,1.0}, name path={77}, draw opacity={1.0}, line width={2}, solid, mark={*}, mark size={3.0 pt}, mark repeat={1}, mark options={color={rgb,1:red,0.0;green,0.0;blue,0.0}, draw opacity={1.0}, fill={rgb,1:red,0.0;green,0.0;blue,1.0}, fill opacity={1.0}, line width={0.75}, rotate={0}, solid}]
        table[row sep={\\}]
        {
            \\
            500.0  1.1213144906666666  \\
            1000.0  1.7232759576666667  \\
            2000.0  3.383519186333334  \\
            4000.0  6.3929680190000004  \\
        }
        ;
    \addplot[color={rgb,1:red,1.0;green,0.0;blue,0.0}, name path={78}, draw opacity={1.0}, line width={2}, solid, mark={*}, mark size={3.0 pt}, mark repeat={1}, mark options={color={rgb,1:red,0.0;green,0.0;blue,0.0}, draw opacity={1.0}, fill={rgb,1:red,1.0;green,0.0;blue,0.0}, fill opacity={1.0}, line width={0.75}, rotate={0}, solid}]
        table[row sep={\\}]
        {
            \\
            500.0  0.258757995  \\
            1000.0  1.5987543039999998  \\
            2000.0  5.9865310276666674  \\
            4000.0  28.40400399533333  \\
        }
        ;
\end{axis}
\end{tikzpicture}}
  \resizebox{.32\textwidth}{!}{% Recommended preamble:
% \usetikzlibrary{arrows.meta}
% \usetikzlibrary{backgrounds}
% \usepgfplotslibrary{patchplots}
% \usepgfplotslibrary{fillbetween}
% \pgfplotsset{%
%     layers/standard/.define layer set={%
%         background,axis background,axis grid,axis ticks,axis lines,axis tick labels,pre main,main,axis descriptions,axis foreground%
%     }{
%         grid style={/pgfplots/on layer=axis grid},%
%         tick style={/pgfplots/on layer=axis ticks},%
%         axis line style={/pgfplots/on layer=axis lines},%
%         label style={/pgfplots/on layer=axis descriptions},%
%         legend style={/pgfplots/on layer=axis descriptions},%
%         title style={/pgfplots/on layer=axis descriptions},%
%         colorbar style={/pgfplots/on layer=axis descriptions},%
%         ticklabel style={/pgfplots/on layer=axis tick labels},%
%         axis background@ style={/pgfplots/on layer=axis background},%
%         3d box foreground style={/pgfplots/on layer=axis foreground},%
%     },
% }

\begin{tikzpicture}[/tikz/background rectangle/.style={fill={rgb,1:red,1.0;green,1.0;blue,1.0}, fill opacity={1.0}, draw opacity={1.0}}, show background rectangle]
\begin{axis}[point meta max={nan}, point meta min={nan}, legend cell align={left}, legend columns={1}, title={}, title style={at={{(0.5,1)}}, anchor={south}, font={{\fontsize{26 pt}{33.800000000000004 pt}\selectfont}}, color={rgb,1:red,0.0;green,0.0;blue,0.0}, draw opacity={1.0}, rotate={0.0}, align={center}}, legend style={color={rgb,1:red,0.0;green,0.0;blue,0.0}, draw opacity={1.0}, line width={1}, solid, fill={rgb,1:red,1.0;green,1.0;blue,1.0}, fill opacity={1.0}, text opacity={1.0}, font={{\fontsize{20 pt}{26.0 pt}\selectfont}}, text={rgb,1:red,0.0;green,0.0;blue,0.0}, cells={anchor={center}}, at={(0.98, 0.98)}, anchor={north east}}, axis background/.style={fill={rgb,1:red,1.0;green,1.0;blue,1.0}, opacity={1.0}}, anchor={north west}, xshift={8.0mm}, yshift={-1.0mm}, width={133.24mm}, height={89.52mm}, scaled x ticks={false}, xlabel={Window length C (p=0.3)}, x tick style={color={rgb,1:red,0.0;green,0.0;blue,0.0}, opacity={1.0}}, x tick label style={color={rgb,1:red,0.0;green,0.0;blue,0.0}, opacity={1.0}, rotate={0}}, xlabel style={at={(ticklabel cs:0.5)}, anchor=near ticklabel, at={{(ticklabel cs:0.5)}}, anchor={near ticklabel}, font={{\fontsize{28 pt}{36.4 pt}\selectfont}}, color={rgb,1:red,0.0;green,0.0;blue,0.0}, draw opacity={1.0}, rotate={0.0}}, xmajorgrids={true}, xmin={395.0}, xmax={4105.0}, xticklabels={{$1000$,$2000$,$3000$,$4000$}}, xtick={{1000.0,2000.0,3000.0,4000.0}}, xtick align={inside}, xticklabel style={font={{\fontsize{24 pt}{31.200000000000003 pt}\selectfont}}, color={rgb,1:red,0.0;green,0.0;blue,0.0}, draw opacity={1.0}, rotate={0.0}}, x grid style={color={rgb,1:red,0.0;green,0.0;blue,0.0}, draw opacity={0.1}, line width={0.5}, solid}, axis x line*={left}, x axis line style={color={rgb,1:red,0.0;green,0.0;blue,0.0}, draw opacity={1.0}, line width={1}, solid}, scaled y ticks={false}, ylabel={Forecast MNLL}, y tick style={color={rgb,1:red,0.0;green,0.0;blue,0.0}, opacity={1.0}}, y tick label style={color={rgb,1:red,0.0;green,0.0;blue,0.0}, opacity={1.0}, rotate={0}}, ylabel style={at={(ticklabel cs:0.5)}, anchor=near ticklabel, at={{(ticklabel cs:0.5)}}, anchor={near ticklabel}, font={{\fontsize{28 pt}{36.4 pt}\selectfont}}, color={rgb,1:red,0.0;green,0.0;blue,0.0}, draw opacity={1.0}, rotate={0.0}}, ymajorgrids={true}, ymin={-0.5366320395034325}, ymax={17.61048353693081}, yticklabels={{$0$,$5$,$10$,$15$}}, ytick={{0.0,5.0,10.0,15.0}}, ytick align={inside}, yticklabel style={font={{\fontsize{24 pt}{31.200000000000003 pt}\selectfont}}, color={rgb,1:red,0.0;green,0.0;blue,0.0}, draw opacity={1.0}, rotate={0.0}}, y grid style={color={rgb,1:red,0.0;green,0.0;blue,0.0}, draw opacity={0.1}, line width={0.5}, solid}, axis y line*={left}, y axis line style={color={rgb,1:red,0.0;green,0.0;blue,0.0}, draw opacity={1.0}, line width={1}, solid}, colorbar={false}]
    \addplot[color={rgb,1:red,0.0;green,0.0;blue,1.0}, name path={79}, draw opacity={1.0}, line width={2}, solid, mark={*}, mark size={3.0 pt}, mark repeat={1}, mark options={color={rgb,1:red,0.0;green,0.0;blue,0.0}, draw opacity={1.0}, fill={rgb,1:red,0.0;green,0.0;blue,1.0}, fill opacity={1.0}, line width={0.75}, rotate={0}, solid}]
        table[row sep={\\}]
        {
            \\
            500.0  7.520092067491118  \\
            1000.0  4.292673436562858  \\
            2000.0  3.166860935783584  \\
            4000.0  10.573846884844064  \\
        }
        ;
    \addlegendentry {SS-LMC (ours)}
    \addplot[color={rgb,1:red,1.0;green,0.0;blue,0.0}, name path={80}, draw opacity={1.0}, line width={2}, solid, mark={*}, mark size={3.0 pt}, mark repeat={1}, mark options={color={rgb,1:red,0.0;green,0.0;blue,0.0}, draw opacity={1.0}, fill={rgb,1:red,1.0;green,0.0;blue,0.0}, fill opacity={1.0}, line width={0.75}, rotate={0}, solid}]
        table[row sep={\\}]
        {
            \\
            500.0  7.520092064853254  \\
            1000.0  4.29267343356078  \\
            2000.0  3.1668609352790664  \\
            4000.0  10.573846884380956  \\
        }
        ;
    \addlegendentry {KM-LMC}
\end{axis}
\end{tikzpicture}}
  \resizebox{.32\textwidth}{!}{% Recommended preamble:
% \usetikzlibrary{arrows.meta}
% \usetikzlibrary{backgrounds}
% \usepgfplotslibrary{patchplots}
% \usepgfplotslibrary{fillbetween}
% \pgfplotsset{%
%     layers/standard/.define layer set={%
%         background,axis background,axis grid,axis ticks,axis lines,axis tick labels,pre main,main,axis descriptions,axis foreground%
%     }{
%         grid style={/pgfplots/on layer=axis grid},%
%         tick style={/pgfplots/on layer=axis ticks},%
%         axis line style={/pgfplots/on layer=axis lines},%
%         label style={/pgfplots/on layer=axis descriptions},%
%         legend style={/pgfplots/on layer=axis descriptions},%
%         title style={/pgfplots/on layer=axis descriptions},%
%         colorbar style={/pgfplots/on layer=axis descriptions},%
%         ticklabel style={/pgfplots/on layer=axis tick labels},%
%         axis background@ style={/pgfplots/on layer=axis background},%
%         3d box foreground style={/pgfplots/on layer=axis foreground},%
%     },
% }

\begin{tikzpicture}[/tikz/background rectangle/.style={fill={rgb,1:red,1.0;green,1.0;blue,1.0}, fill opacity={1.0}, draw opacity={1.0}}, show background rectangle]
\begin{axis}[point meta max={nan}, point meta min={nan}, legend cell align={left}, legend columns={1}, title={}, title style={at={{(0.5,1)}}, anchor={south}, font={{\fontsize{26 pt}{33.800000000000004 pt}\selectfont}}, color={rgb,1:red,0.0;green,0.0;blue,0.0}, draw opacity={1.0}, rotate={0.0}, align={center}}, legend style={color={rgb,1:red,0.0;green,0.0;blue,0.0}, draw opacity={1.0}, line width={1}, solid, fill={rgb,1:red,1.0;green,1.0;blue,1.0}, fill opacity={1.0}, text opacity={1.0}, font={{\fontsize{20 pt}{26.0 pt}\selectfont}}, text={rgb,1:red,0.0;green,0.0;blue,0.0}, cells={anchor={center}}, at={(1.02, 1)}, anchor={north west}}, axis background/.style={fill={rgb,1:red,1.0;green,1.0;blue,1.0}, opacity={1.0}}, anchor={north west}, xshift={8.0mm}, yshift={-1.0mm}, width={133.24mm}, height={89.52mm}, scaled x ticks={false}, xlabel={Dropout p (C=1000)}, x tick style={color={rgb,1:red,0.0;green,0.0;blue,0.0}, opacity={1.0}}, x tick label style={color={rgb,1:red,0.0;green,0.0;blue,0.0}, opacity={1.0}, rotate={0}}, xlabel style={at={(ticklabel cs:0.5)}, anchor=near ticklabel, at={{(ticklabel cs:0.5)}}, anchor={near ticklabel}, font={{\fontsize{28 pt}{36.4 pt}\selectfont}}, color={rgb,1:red,0.0;green,0.0;blue,0.0}, draw opacity={1.0}, rotate={0.0}}, xmajorgrids={true}, xmin={0.08199999999999996}, xmax={0.718}, xticklabels={{$0.1$,$0.2$,$0.3$,$0.4$,$0.5$,$0.6$,$0.7$}}, xtick={{0.1,0.2,0.30000000000000004,0.4,0.5,0.6000000000000001,0.7000000000000001}}, xtick align={inside}, xticklabel style={font={{\fontsize{24 pt}{31.200000000000003 pt}\selectfont}}, color={rgb,1:red,0.0;green,0.0;blue,0.0}, draw opacity={1.0}, rotate={0.0}}, x grid style={color={rgb,1:red,0.0;green,0.0;blue,0.0}, draw opacity={0.1}, line width={0.5}, solid}, axis x line*={left}, x axis line style={color={rgb,1:red,0.0;green,0.0;blue,0.0}, draw opacity={1.0}, line width={1}, solid}, scaled y ticks={false}, ylabel={Fit+forecast time (s)}, y tick style={color={rgb,1:red,0.0;green,0.0;blue,0.0}, opacity={1.0}}, y tick label style={color={rgb,1:red,0.0;green,0.0;blue,0.0}, opacity={1.0}, rotate={0}}, ylabel style={at={(ticklabel cs:0.5)}, anchor=near ticklabel, at={{(ticklabel cs:0.5)}}, anchor={near ticklabel}, font={{\fontsize{28 pt}{36.4 pt}\selectfont}}, color={rgb,1:red,0.0;green,0.0;blue,0.0}, draw opacity={1.0}, rotate={0.0}}, ymode={log}, log basis y={10}, ymajorgrids={true}, ymin={0.484622055}, ymax={2.702002673533334}, yticklabels={{0.5,1,2}}, ytick={{0.5,1.0,2.0}}, ytick align={inside}, yticklabel style={font={{\fontsize{24 pt}{31.200000000000003 pt}\selectfont}}, color={rgb,1:red,0.0;green,0.0;blue,0.0}, draw opacity={1.0}, rotate={0.0}}, y grid style={color={rgb,1:red,0.0;green,0.0;blue,0.0}, draw opacity={0.1}, line width={0.5}, solid}, axis y line*={left}, y axis line style={color={rgb,1:red,0.0;green,0.0;blue,0.0}, draw opacity={1.0}, line width={1}, solid}, colorbar={false}]
    \addplot[color={rgb,1:red,0.0;green,0.0;blue,1.0}, name path={83}, draw opacity={1.0}, line width={2}, solid, mark={*}, mark size={3.0 pt}, mark repeat={1}, mark options={color={rgb,1:red,0.0;green,0.0;blue,0.0}, draw opacity={1.0}, fill={rgb,1:red,0.0;green,0.0;blue,1.0}, fill opacity={1.0}, line width={0.75}, rotate={0}, solid}]
        table[row sep={\\}]
        {
            \\
            0.1  1.930001909666667  \\
            0.3  1.7130187123333334  \\
            0.5  1.7986713046666665  \\
            0.7  1.6132579053333334  \\
        }
        ;
    \addplot[color={rgb,1:red,1.0;green,0.0;blue,0.0}, name path={84}, draw opacity={1.0}, line width={2}, solid, mark={*}, mark size={3.0 pt}, mark repeat={1}, mark options={color={rgb,1:red,0.0;green,0.0;blue,0.0}, draw opacity={1.0}, fill={rgb,1:red,1.0;green,0.0;blue,0.0}, fill opacity={1.0}, line width={0.75}, rotate={0}, solid}]
        table[row sep={\\}]
        {
            \\
            0.1  1.7998955779999999  \\
            0.3  1.609248048666667  \\
            0.5  1.0617279973333333  \\
            0.7  0.775395288  \\
        }
        ;
\end{axis}
\end{tikzpicture}}
  \caption{ETTh1 temporal forecasting ($M{=}1$ natural ordering, $D{=}7$, $L{=}4$, $3$ seeds): train on the first half of a length-$C$ window, forecast the held-out second half under per-(timestamp, output) dropout. At $M{=}1$ the SS-LMC and KM-LMC posteriors are identical, so the comparison is computational. Left: fit+forecast wall-clock vs.\ $C$ at dropout $p{=}0.3$ (log scale). Middle: held-out MNLL vs.\ $C$. Right: time vs.\ $p$ at $C{=}1000$ (log scale).}
  \label{fig:ETT_m1}
\end{figure*}

\section{Accuracy--Cost Frontier} \label{app:pareto}

Held-out score against fit+forecast wall-clock at fixed window length separates the two accuracy measures of \cref{sec:ett}. Both windows shown are ones where exact KM-LMC still fits in memory, so every panel carries a complete four-method frontier. On RMSE (\cref{fig:pareto}, top row) the mean ordering shifts with the window: at $C{=}1000$ SVGP-LMC is on average both cheaper and more accurate than KM-LMC and NNGP-LMC, whereas at $C{=}2000$ the four means form a monotone frontier. The accuracy differences are small against the seed spread, however: at $C{=}2000$ the entire $0.18$ RMSE span from SS-LMC to KM-LMC is half the $\pm 0.34$ standard deviation across seeds, bought at a $367\times$ cost ratio, and NNGP-LMC reaches KM-LMC's mean RMSE at $15\times$ less time. On MNLL (bottom row) SVGP-LMC and SS-LMC hold the frontier at both windows, with KM-LMC and NNGP-LMC both slower and less well calibrated on average.
%Panels at different $C$ are separate frontiers rather than one curve: the held-out block moves with $C$, so error levels are not comparable across $C$.

\begin{figure*}[htb]
  \centering
  \resizebox{.82\textwidth}{!}{\input{figures/partial_obs_mdim/pareto.tikz}}
  \caption{Held-out score against fit+forecast wall-clock on ETTh1 ($M{=}3$, $D{=}4$, $L{=}3$, dropout $p{=}0.3$, $10$ seeds). Lower-left is better. Top row: RMSE; bottom row: MNLL. Columns are $C{=}1000$ and $C{=}2000$. Dashed line is Pareto front. Bars are $\pm 1$ standard deviation over seeds.}
  \label{fig:pareto}
  \vspace{-10pt}
\end{figure*}

\section{Sensitivity to the Chain Starting Point} \label{app:chain_start}

\cref{prop:chain_start} bounds the worst-case variation across starting points. In this experiment, we measure the variation over $20$ starts. Within a seed, the dropout mask and train/test split are fixed on physical points before ordering. $W$ is fixed, so every start sees the same observations. Per start we take the median over $10$ seeds, then the standard deviation across starts. Since that is itself an estimated dispersion, the error bars are a bootstrap $95\%$ confidence interval for it (relative standard error $\approx\!16\%$ at $20$ starts), so only well-separated points differ.

Sensitivity is governed by input dimension, not candidate-set size. Against $M$ (\cref{fig:chain_start}, left) the spread is flat at $2.5$--$4.0\times10^{-4}$ for $M \le 8$ and steps up roughly threefold at $M \ge 16$ mirroring the $\varepsilon_\pi = \mathcal{O}(C^{1-1/M})$ compression of $M$-dimensional geometry into one chain coordinate. Against $C$ (right) the intervals overlap, so the spread is flat in $C$. The relative effect stays negligible: on ETTh1 the coefficient of variation of held-out RMSE across starts is $\le 0.09\%$ at every $C$. MNLL, not plotted, is somewhat more sensitive, growing four- to sevenfold over the same range of $M$. Three checks support reading this as real but harmless path dependence: the permutation-invariant KM-LMC control varies $10$--$50\times$ less. The even, random and farthest-point start schemes give the same chain diversity and spread, so a bad chain cannot be engineered from a bad start. $\varepsilon_\pi$ does not predict per-start error, its bootstrap correlation intervals remain close to zero. The observed variation therefore sits far inside the envelope of \cref{prop:chain_start}, which holds but is conservative.

\begin{figure*}[htb]
  \centering
  \resizebox{.46\textwidth}{!}{% Recommended preamble:
% \usetikzlibrary{arrows.meta}
% \usetikzlibrary{backgrounds}
% \usepgfplotslibrary{patchplots}
% \usepgfplotslibrary{fillbetween}
% \pgfplotsset{%
%     layers/standard/.define layer set={%
%         background,axis background,axis grid,axis ticks,axis lines,axis tick labels,pre main,main,axis descriptions,axis foreground%
%     }{
%         grid style={/pgfplots/on layer=axis grid},%
%         tick style={/pgfplots/on layer=axis ticks},%
%         axis line style={/pgfplots/on layer=axis lines},%
%         label style={/pgfplots/on layer=axis descriptions},%
%         legend style={/pgfplots/on layer=axis descriptions},%
%         title style={/pgfplots/on layer=axis descriptions},%
%         colorbar style={/pgfplots/on layer=axis descriptions},%
%         ticklabel style={/pgfplots/on layer=axis tick labels},%
%         axis background@ style={/pgfplots/on layer=axis background},%
%         3d box foreground style={/pgfplots/on layer=axis foreground},%
%     },
% }

\begin{tikzpicture}[/tikz/background rectangle/.style={fill={rgb,1:red,1.0;green,1.0;blue,1.0}, fill opacity={1.0}, draw opacity={1.0}}, show background rectangle]
\begin{axis}[point meta max={nan}, point meta min={nan}, legend cell align={left}, legend columns={1}, title={}, title style={at={{(0.5,1)}}, anchor={south}, font={{\fontsize{6.3 pt}{8.19 pt}\selectfont}}, color={rgb,1:red,0.0;green,0.0;blue,0.0}, draw opacity={1.0}, rotate={0.0}, align={center}}, legend style={color={rgb,1:red,0.0;green,0.0;blue,0.0}, draw opacity={1.0}, line width={0.7}, solid, fill={rgb,1:red,1.0;green,1.0;blue,1.0}, fill opacity={1.0}, text opacity={1.0}, font={{\fontsize{6.3 pt}{8.19 pt}\selectfont}}, text={rgb,1:red,0.0;green,0.0;blue,0.0}, cells={anchor={center}}, at={(1.02, 1)}, anchor={north west}}, axis background/.style={fill={rgb,1:red,1.0;green,1.0;blue,1.0}, opacity={1.0}}, anchor={north west}, xshift={2.0mm}, yshift={-1.0mm}, width={58.18248114mm}, height={35.81285286056942mm}, scaled x ticks={false}, xlabel={Input dimension M}, x tick style={color={rgb,1:red,0.0;green,0.0;blue,0.0}, opacity={1.0}}, x tick label style={color={rgb,1:red,0.0;green,0.0;blue,0.0}, opacity={1.0}, rotate={0}}, xlabel style={at={(ticklabel cs:0.5)}, anchor=near ticklabel, at={{(ticklabel cs:0.5)}}, anchor={near ticklabel}, font={{\fontsize{6.3 pt}{8.19 pt}\selectfont}}, color={rgb,1:red,0.0;green,0.0;blue,0.0}, draw opacity={1.0}, rotate={0.0}}, xmode={log}, log basis x={2}, xmajorgrids={true}, xmin={1.4814814814814814}, xmax={43.2}, xticklabels={{2,4,8,16,32}}, xtick={{2.0,4.0,8.0,16.0,32.0}}, xtick align={inside}, xticklabel style={font={{\fontsize{4.8999999999999995 pt}{6.369999999999999 pt}\selectfont}}, color={rgb,1:red,0.0;green,0.0;blue,0.0}, draw opacity={1.0}, rotate={0.0}}, x grid style={color={rgb,1:red,0.0;green,0.0;blue,0.0}, draw opacity={0.1}, line width={0.35}, solid}, axis x line*={left}, x axis line style={color={rgb,1:red,0.0;green,0.0;blue,0.0}, draw opacity={1.0}, line width={0.7}, solid}, scaled y ticks={false}, ylabel={RMSE spread [output units]}, y tick style={color={rgb,1:red,0.0;green,0.0;blue,0.0}, opacity={1.0}}, y tick label style={color={rgb,1:red,0.0;green,0.0;blue,0.0}, opacity={1.0}, rotate={0}}, ylabel style={at={(ticklabel cs:0.5)}, anchor=near ticklabel, at={{(ticklabel cs:0.5)}}, anchor={near ticklabel}, font={{\fontsize{6.3 pt}{8.19 pt}\selectfont}}, color={rgb,1:red,0.0;green,0.0;blue,0.0}, draw opacity={1.0}, rotate={0.0}}, ymode={log}, log basis y={10}, ymajorgrids={true}, ymin={5.0e-5}, ymax={0.005}, yticklabels={{$10^{-4}$,$10^{-3}$}}, ytick={{0.0001,0.001}}, ytick align={inside}, yticklabel style={font={{\fontsize{4.8999999999999995 pt}{6.369999999999999 pt}\selectfont}}, color={rgb,1:red,0.0;green,0.0;blue,0.0}, draw opacity={1.0}, rotate={0.0}}, y grid style={color={rgb,1:red,0.0;green,0.0;blue,0.0}, draw opacity={0.1}, line width={0.35}, solid}, axis y line*={left}, y axis line style={color={rgb,1:red,0.0;green,0.0;blue,0.0}, draw opacity={1.0}, line width={0.7}, solid}, colorbar={false}]
    \addplot[color={rgb,1:red,0.0;green,0.0;blue,1.0}, name path={2}, draw opacity={1.0}, line width={1.4}, solid, mark={*}, mark size={1.5749999999999997 pt}, mark repeat={1}, mark options={color={rgb,1:red,0.0;green,0.0;blue,0.0}, draw opacity={1.0}, fill={rgb,1:red,0.0;green,0.0;blue,1.0}, fill opacity={1.0}, line width={0.5249999999999999}, rotate={0}, solid}]
        table[row sep={\\}]
        {
            \\
            2.0  0.00037541274181551905  \\
            4.0  0.0002479157441503319  \\
            8.0  0.00039455162362158773  \\
            16.0  0.0012256624861658834  \\
            32.0  0.001096878063435457  \\
        }
        ;
    \addplot[color={rgb,1:red,0.0;green,0.0;blue,0.0}, name path={3}, draw opacity={1.0}, line width={1.4}, solid, mark={-}, mark size={1.5749999999999997 pt}, mark repeat={1}, mark options={color={rgb,1:red,0.0;green,0.0;blue,0.0}, draw opacity={1.0}, fill={rgb,1:red,0.0;green,0.0;blue,0.0}, fill opacity={1.0}, line width={0.5249999999999999}, rotate={0}, solid}]
        table[row sep={\\}]
        {
            \\
            2.0  0.0002770975633668898  \\
            2.0  0.0004423976638862547  \\
        }
        ;
    \addplot[color={rgb,1:red,0.0;green,0.0;blue,0.0}, name path={3}, draw opacity={1.0}, line width={1.4}, solid, mark={-}, mark size={1.5749999999999997 pt}, mark repeat={1}, mark options={color={rgb,1:red,0.0;green,0.0;blue,0.0}, draw opacity={1.0}, fill={rgb,1:red,0.0;green,0.0;blue,0.0}, fill opacity={1.0}, line width={0.5249999999999999}, rotate={0}, solid}]
        table[row sep={\\}]
        {
            \\
            4.0  0.000174747886034784  \\
            4.0  0.00030269636147037114  \\
        }
        ;
    \addplot[color={rgb,1:red,0.0;green,0.0;blue,0.0}, name path={3}, draw opacity={1.0}, line width={1.4}, solid, mark={-}, mark size={1.5749999999999997 pt}, mark repeat={1}, mark options={color={rgb,1:red,0.0;green,0.0;blue,0.0}, draw opacity={1.0}, fill={rgb,1:red,0.0;green,0.0;blue,0.0}, fill opacity={1.0}, line width={0.5249999999999999}, rotate={0}, solid}]
        table[row sep={\\}]
        {
            \\
            8.0  0.00029023793746487666  \\
            8.0  0.00046448292414479455  \\
        }
        ;
    \addplot[color={rgb,1:red,0.0;green,0.0;blue,0.0}, name path={3}, draw opacity={1.0}, line width={1.4}, solid, mark={-}, mark size={1.5749999999999997 pt}, mark repeat={1}, mark options={color={rgb,1:red,0.0;green,0.0;blue,0.0}, draw opacity={1.0}, fill={rgb,1:red,0.0;green,0.0;blue,0.0}, fill opacity={1.0}, line width={0.5249999999999999}, rotate={0}, solid}]
        table[row sep={\\}]
        {
            \\
            16.0  0.0007978163656489447  \\
            16.0  0.0015331483172271055  \\
        }
        ;
    \addplot[color={rgb,1:red,0.0;green,0.0;blue,0.0}, name path={3}, draw opacity={1.0}, line width={1.4}, solid, mark={-}, mark size={1.5749999999999997 pt}, mark repeat={1}, mark options={color={rgb,1:red,0.0;green,0.0;blue,0.0}, draw opacity={1.0}, fill={rgb,1:red,0.0;green,0.0;blue,0.0}, fill opacity={1.0}, line width={0.5249999999999999}, rotate={0}, solid}]
        table[row sep={\\}]
        {
            \\
            32.0  0.0007165816547794108  \\
            32.0  0.0014223104137001132  \\
        }
        ;
    \node[right, , color={rgb,1:red,0.502;green,0.502;blue,0.502}, draw opacity={1.0}, rotate={0.0}, font={{\fontsize{4.8999999999999995 pt}{6.369999999999999 pt}\selectfont}}]  at (axis cs:2.0,0.0033333333333333335) {20 starts $\times$ 10 seeds};
\end{axis}
\end{tikzpicture}}
  \hfill
  \resizebox{.46\textwidth}{!}{% Recommended preamble:
% \usetikzlibrary{arrows.meta}
% \usetikzlibrary{backgrounds}
% \usepgfplotslibrary{patchplots}
% \usepgfplotslibrary{fillbetween}
% \pgfplotsset{%
%     layers/standard/.define layer set={%
%         background,axis background,axis grid,axis ticks,axis lines,axis tick labels,pre main,main,axis descriptions,axis foreground%
%     }{
%         grid style={/pgfplots/on layer=axis grid},%
%         tick style={/pgfplots/on layer=axis ticks},%
%         axis line style={/pgfplots/on layer=axis lines},%
%         label style={/pgfplots/on layer=axis descriptions},%
%         legend style={/pgfplots/on layer=axis descriptions},%
%         title style={/pgfplots/on layer=axis descriptions},%
%         colorbar style={/pgfplots/on layer=axis descriptions},%
%         ticklabel style={/pgfplots/on layer=axis tick labels},%
%         axis background@ style={/pgfplots/on layer=axis background},%
%         3d box foreground style={/pgfplots/on layer=axis foreground},%
%     },
% }

\begin{tikzpicture}[/tikz/background rectangle/.style={fill={rgb,1:red,1.0;green,1.0;blue,1.0}, fill opacity={1.0}, draw opacity={1.0}}, show background rectangle]
\begin{axis}[point meta max={nan}, point meta min={nan}, legend cell align={left}, legend columns={1}, title={}, title style={at={{(0.5,1)}}, anchor={south}, font={{\fontsize{6.3 pt}{8.19 pt}\selectfont}}, color={rgb,1:red,0.0;green,0.0;blue,0.0}, draw opacity={1.0}, rotate={0.0}, align={center}}, legend style={color={rgb,1:red,0.0;green,0.0;blue,0.0}, draw opacity={1.0}, line width={0.7}, solid, fill={rgb,1:red,1.0;green,1.0;blue,1.0}, fill opacity={1.0}, text opacity={1.0}, font={{\fontsize{6.3 pt}{8.19 pt}\selectfont}}, text={rgb,1:red,0.0;green,0.0;blue,0.0}, cells={anchor={center}}, at={(1.02, 1)}, anchor={north west}}, axis background/.style={fill={rgb,1:red,1.0;green,1.0;blue,1.0}, opacity={1.0}}, anchor={north west}, xshift={2.0mm}, yshift={-1.0mm}, width={58.18248114mm}, height={35.81285286056942mm}, scaled x ticks={false}, xlabel={Candidate chain length C}, x tick style={color={rgb,1:red,0.0;green,0.0;blue,0.0}, opacity={1.0}}, x tick label style={color={rgb,1:red,0.0;green,0.0;blue,0.0}, opacity={1.0}, rotate={0}}, xlabel style={at={(ticklabel cs:0.5)}, anchor=near ticklabel, at={{(ticklabel cs:0.5)}}, anchor={near ticklabel}, font={{\fontsize{6.3 pt}{8.19 pt}\selectfont}}, color={rgb,1:red,0.0;green,0.0;blue,0.0}, draw opacity={1.0}, rotate={0.0}}, xmode={log}, log basis x={10}, xmajorgrids={true}, xmin={869.5652173913044}, xmax={4600.0}, xticklabels={{1000,2000,4000}}, xtick={{1000.0,2000.0,4000.0}}, xtick align={inside}, xticklabel style={font={{\fontsize{4.8999999999999995 pt}{6.369999999999999 pt}\selectfont}}, color={rgb,1:red,0.0;green,0.0;blue,0.0}, draw opacity={1.0}, rotate={0.0}}, x grid style={color={rgb,1:red,0.0;green,0.0;blue,0.0}, draw opacity={0.1}, line width={0.35}, solid}, axis x line*={left}, x axis line style={color={rgb,1:red,0.0;green,0.0;blue,0.0}, draw opacity={1.0}, line width={0.7}, solid}, scaled y ticks={false}, ylabel={RMSE spread [ETT load units]}, y tick style={color={rgb,1:red,0.0;green,0.0;blue,0.0}, opacity={1.0}}, y tick label style={color={rgb,1:red,0.0;green,0.0;blue,0.0}, opacity={1.0}, rotate={0}}, ylabel style={at={(ticklabel cs:0.5)}, anchor=near ticklabel, at={{(ticklabel cs:0.5)}}, anchor={near ticklabel}, font={{\fontsize{6.3 pt}{8.19 pt}\selectfont}}, color={rgb,1:red,0.0;green,0.0;blue,0.0}, draw opacity={1.0}, rotate={0.0}}, ymode={log}, log basis y={10}, ymajorgrids={true}, ymin={0.001}, ymax={0.01}, yticklabels={{$10^{-3}$,$10^{-2}$}}, ytick={{0.001,0.01}}, ytick align={inside}, yticklabel style={font={{\fontsize{4.8999999999999995 pt}{6.369999999999999 pt}\selectfont}}, color={rgb,1:red,0.0;green,0.0;blue,0.0}, draw opacity={1.0}, rotate={0.0}}, y grid style={color={rgb,1:red,0.0;green,0.0;blue,0.0}, draw opacity={0.1}, line width={0.35}, solid}, axis y line*={left}, y axis line style={color={rgb,1:red,0.0;green,0.0;blue,0.0}, draw opacity={1.0}, line width={0.7}, solid}, colorbar={false}]
    \addplot[color={rgb,1:red,0.0;green,0.0;blue,1.0}, name path={4}, draw opacity={1.0}, line width={1.4}, solid, mark={*}, mark size={1.5749999999999997 pt}, mark repeat={1}, mark options={color={rgb,1:red,0.0;green,0.0;blue,0.0}, draw opacity={1.0}, fill={rgb,1:red,0.0;green,0.0;blue,1.0}, fill opacity={1.0}, line width={0.5249999999999999}, rotate={0}, solid}]
        table[row sep={\\}]
        {
            \\
            1000.0  0.0024580078471197306  \\
            2000.0  0.0039544447518547994  \\
            4000.0  0.00428457164997435  \\
        }
        ;
    \addplot[color={rgb,1:red,0.0;green,0.0;blue,0.0}, name path={5}, draw opacity={1.0}, line width={1.4}, solid, mark={-}, mark size={1.5749999999999997 pt}, mark repeat={1}, mark options={color={rgb,1:red,0.0;green,0.0;blue,0.0}, draw opacity={1.0}, fill={rgb,1:red,0.0;green,0.0;blue,0.0}, fill opacity={1.0}, line width={0.5249999999999999}, rotate={0}, solid}]
        table[row sep={\\}]
        {
            \\
            1000.0  0.001562729376039964  \\
            1000.0  0.003150485686010693  \\
        }
        ;
    \addplot[color={rgb,1:red,0.0;green,0.0;blue,0.0}, name path={5}, draw opacity={1.0}, line width={1.4}, solid, mark={-}, mark size={1.5749999999999997 pt}, mark repeat={1}, mark options={color={rgb,1:red,0.0;green,0.0;blue,0.0}, draw opacity={1.0}, fill={rgb,1:red,0.0;green,0.0;blue,0.0}, fill opacity={1.0}, line width={0.5249999999999999}, rotate={0}, solid}]
        table[row sep={\\}]
        {
            \\
            2000.0  0.002682676951433166  \\
            2000.0  0.0049023138445093385  \\
        }
        ;
    \addplot[color={rgb,1:red,0.0;green,0.0;blue,0.0}, name path={5}, draw opacity={1.0}, line width={1.4}, solid, mark={-}, mark size={1.5749999999999997 pt}, mark repeat={1}, mark options={color={rgb,1:red,0.0;green,0.0;blue,0.0}, draw opacity={1.0}, fill={rgb,1:red,0.0;green,0.0;blue,0.0}, fill opacity={1.0}, line width={0.5249999999999999}, rotate={0}, solid}]
        table[row sep={\\}]
        {
            \\
            4000.0  0.0023791611214857553  \\
            4000.0  0.005508481466493426  \\
        }
        ;
    \node[right, , color={rgb,1:red,0.502;green,0.502;blue,0.502}, draw opacity={1.0}, rotate={0.0}, font={{\fontsize{4.8999999999999995 pt}{6.369999999999999 pt}\selectfont}}]  at (axis cs:1000.0,0.008333333333333333) {20 starts $\times$ 10 seeds};
\end{axis}
\end{tikzpicture}}
  \caption{Spread of SS-LMC's held-out RMSE across $20$ chain starting points (median over $10$ seeds per start, then standard deviation across starts). Bars are a bootstrap $95\%$ confidence interval for that standard deviation. Left: vs.\ input dimension $M$ on the synthetic sensor network ($C{=}2000$, $D{=}3$, $L{=}2$). Right: vs.\ candidate-set size $C$ on ETTh1 ($M{=}3$, $D{=}4$, $L{=}3$, $p{=}0.3$).}
  \label{fig:chain_start}
  \vspace{-10pt}
\end{figure*}

\section{Proof of Lemma 1} \label{app:lemma1}
\begin{proof}
  Matérn-class kernels are differentiable away from the origin with bounded derivative on any compact range of distances, hence Lipschitz with a finite constant $\alpha_l$ on the range induced by $X$ and the chain $\pi$. For Matérn-3/2 in particular, the maximum of $|\kappa_l'(d)|$ is attained at $d = \ell_l/\sqrt 3$, giving $\alpha_l = \sqrt 3\,\gamma_l^2/(e\,\ell_l)$. Other smoothness orders give analogous closed forms.
  Lipschitz continuity then upper bounds the difference in kernel entries
  \begin{equation}
    \big|\left[K_l - K_{\pi,l}\right]_{ij}\big| = \big|\kappa_l(\|x_i-x_j\|) - \kappa_l(|s_{\pi(i)} - s_{\pi(j)}|)\big| \leq \alpha_l\, \big|\|x_i-x_j\| - |s_{\pi(i)} - s_{\pi(j)}|\big| \ .
  \end{equation}
  This implies $\|K_l - K_{\pi,l}\|_{\max} \leq \alpha_l\, \max_{ij} \big|\|x_i-x_j\| - |s_{\pi(i)} - s_{\pi(j)}|\big| = \alpha_l\, \varepsilon_{\pi}$.
  For any $C\times C$ matrix $A$, $\|A\|_2 \leq \sqrt{\|A\|_1\|A\|_{\infty}} \leq C\|A\|_{\max}$ \citep{horn_matrix_2012}. Hence $\|K_l - K_{\pi,l}\|_2 \leq C\,\alpha_l\,\varepsilon_{\pi}$.
\end{proof}

\section{Lemma 2 and Its Proof} \label{app:lemma2}

\begin{lemma}
  \label{lem:posterior_perturbation}
  Let $K, K_{\pi} \succeq 0$ and $\eta = \|K - K_{\pi}\|_2$. Under scalar observation noise $\sigma_n^2 I$, the GP posterior parameters obtained from $K$ and from $K_{\pi}$ satisfy
  \begin{align}
    \|\mu-\mu_{\pi}\|_2
     & \leq
    \left(
    \frac{\eta}{\sigma_n^2}
    +\frac{(\|K\|_2+\eta)\eta}{\sigma_n^4}
    \right)\|y\|_2 \\
    \|\Sigma-\Sigma_{\pi}\|_2
     & \leq
    \eta
    +\frac{\eta(\|K\|_2+\|K_{\pi}\|_2)}{\sigma_n^2}
    +\frac{\eta\|K\|_2\|K_{\pi}\|_2}{\sigma_n^4}.
  \end{align}
\end{lemma}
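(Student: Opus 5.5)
The plan is to write both posterior parameters in terms of the resolvents $R = (K + \sigma_n^2 I)^{-1}$ and $R_\pi = (K_\pi + \sigma_n^2 I)^{-1}$ and control their difference. Two facts do most of the work. First, since $K, K_\pi \succeq 0$, the eigenvalues of $K+\sigma_n^2 I$ and $K_\pi + \sigma_n^2 I$ are at least $\sigma_n^2$, so $\|R\|_2, \|R_\pi\|_2 \le \sigma_n^{-2}$. Second, the second resolvent identity gives
\begin{equation*}
R - R_\pi = R\,(K_\pi - K)\,R_\pi ,
\end{equation*}
so that $\|R - R_\pi\|_2 \le \eta/\sigma_n^4$.

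For the mean, I will insert the mixed term $K_\pi R\, y$ and write
\begin{equation*}
\mu - \mu_\pi = (K - K_\pi) R\, y + K_\pi (R - R_\pi) y .
\end{equation*}
The first term is bounded by $\eta\,\sigma_n^{-2}\|y\|_2$. The second is bounded by $\|K_\pi\|_2\,\eta\,\sigma_n^{-4}\|y\|_2$. The statement has $\|K\|_2+\eta$ in place of $\|K_\pi\|_2$, so I finish with the triangle inequality $\|K_\pi\|_2 \le \|K\|_2 + \eta$. This matches the stated bound exactly.

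For the covariance, I will first write
\begin{equation*}
\Sigma - \Sigma_\pi = (K - K_\pi) - \bigl(K R K - K_\pi R_\pi K_\pi\bigr)
\end{equation*}
and then telescope the second bracket into three pieces:
\begin{equation*}
K R K - K_\pi R_\pi K_\pi = (K-K_\pi) R K + K_\pi R (K - K_\pi) + K_\pi (R - R_\pi) K_\pi .
\end{equation*}
The first two pieces give $\eta\|K\|_2\sigma_n^{-2}$ and $\eta\|K_\pi\|_2\sigma_n^{-2}$, which together with the leading $\eta$ reproduce the first two terms of the claim. The third piece needs care. Using $\|K_\pi\|_2^2$ there would give $\eta\|K_\pi\|_2^2\sigma_n^{-4}$, not the stated $\eta\|K\|_2\|K_\pi\|_2\sigma_n^{-4}$.

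To get the mixed product, I will choose the telescoping order so the resolvent difference sits between $K$ and $K_\pi$. Concretely:
\begin{equation*}
K R K - K_\pi R_\pi K_\pi = (K - K_\pi) R_\pi K_\pi + K (R - R_\pi) K_\pi + K R (K - K_\pi) .
\end{equation*}
This splitting can be verified by expanding. Its pieces are bounded by $\eta\|K_\pi\|_2\sigma_n^{-2}$, $\|K\|_2\,\eta\,\sigma_n^{-4}\|K_\pi\|_2$ and $\|K\|_2\,\eta\,\sigma_n^{-2}$, which gives exactly the three terms claimed.

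The main obstacle is this choice of telescoping decomposition, so that each norm matches the stated form. If some decomposition instead produced $\|K\|_2^2$, I would fall back on the symmetric splitting above. Everything else is submultiplicativity and the resolvent norm bound.
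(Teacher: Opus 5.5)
Your proposal is correct and follows the paper's proof. Both use the same resolvent bound $\|R-R_\pi\|_2 \le \eta\,\sigma_n^{-4}$, the same mean splitting finished with $\|K_\pi\|_2 \le \|K\|_2+\eta$, and a three-term telescoping that places the resolvent difference between $K$ and $K_\pi$; your final splitting is the transpose of the paper's $(K-K_\pi)RK + K_\pi(R-R_\pi)K + K_\pi R_\pi(K-K_\pi)$ and gives the same three terms.
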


\begin{proof}
  Let $B = K + \sigma_n^2 I$, $B_{\pi} = K_{\pi} + \sigma_n^2 I$.
  Since $K,K_{\pi}\succeq0$, we have
  $B,B_{\pi}\succeq\sigma_n^2 I$, so
  $\|B^{-1}\|_2,\|B_{\pi}^{-1}\|_2\leq\sigma_n^{-2}$.
  The resolvent identity
  \begin{equation}
    B^{-1}-B_{\pi}^{-1}
    =B^{-1}(B_{\pi}-B)B_{\pi}^{-1}
    =B^{-1}(K_{\pi}-K)B_{\pi}^{-1}
  \end{equation}
  gives $\|B^{-1}-B_{\pi}^{-1}\|_2\leq\eta\sigma_n^{-4}$.
  Together with submultiplicativity gives
  \begin{equation}
    \|B^{-1}-B_{\pi}^{-1}\|_2
    \leq \|B^{-1}\|_2\,\|K-K_{\pi}\|_2\,\|B_{\pi}^{-1}\|_2
    \leq \eta\,\sigma_n^{-4}.
  \end{equation}
  Adding and subtracting $K_{\pi}B^{-1}y$ inside
  $\mu-\mu_{\pi}=KB^{-1}y-K_{\pi}B_{\pi}^{-1}y$ splits the difference into a
  prior-difference term and an inverse-difference term:
  \begin{equation}
    \mu-\mu_{\pi}
    =(K-K_{\pi})\,B^{-1}\,y
    \;+\;K_{\pi}\,(B^{-1}-B_{\pi}^{-1})\,y.
  \end{equation}
  Taking norms, applying submultiplicativity together with $\|B^{-1}\|_2\leq
    \sigma_n^{-2}$ and the resolvent bound above, and using
  $\|K_{\pi}\|_2\leq\|K\|_2+\eta$,
  \begin{equation}
    \|\mu-\mu_{\pi}\|_2
    \leq
    \frac{\eta}{\sigma_n^2}\,\|y\|_2
    +
    \frac{(\|K\|_2+\eta)\,\eta}{\sigma_n^4}\,\|y\|_2.
  \end{equation}
  Note that
  $\Sigma-\Sigma_{\pi}=(K-K_{\pi})-\bigl(KB^{-1}K-K_{\pi}B_{\pi}^{-1}K_{\pi}\bigr)$.
  If the second piece is expanded by inserting two telescoping terms,
  $\pm K_{\pi}B^{-1}K$ and $\pm K_{\pi}B_{\pi}^{-1}K$, then we have:
  \begin{equation}
    KB^{-1}K-K_{\pi}B_{\pi}^{-1}K_{\pi}
    =(K-K_{\pi})B^{-1}\,K
    + K_{\pi}\,(B^{-1}-B_{\pi}^{-1})\,K
    + K_{\pi}\,B_{\pi}^{-1}\,(K-K_{\pi}).
  \end{equation}
  Taking norms termwise and using the same norm bounds,
  \begin{equation}
    \|\Sigma-\Sigma_{\pi}\|_2
    \leq
    %  \eta
    %  + \frac{\eta\,\|K\|_2}{\sigma_n^2}
    %  + \frac{\|K_{\pi}\|_2\,\eta\,\|K\|_2}{\sigma_n^4}
    %  + \frac{\|K_{\pi}\|_2\,\eta}{\sigma_n^2} 
    %  = 
    \eta
    +\frac{\eta\,(\|K\|_2+\|K_{\pi}\|_2)}{\sigma_n^2}
    + \frac{\eta\,\|K\|_2\,\|K_{\pi}\|_2}{\sigma_n^4}.
  \end{equation}
\end{proof}

\section{Proof of Theorem 1} \label{app:thm1}

\begin{proof}
  Applying \cref{lem:chain_kernel_bound} to each Matérn-class latent kernel gives $\|K_l - K_{\pi,l}\|_2 \leq C\,\alpha_l\,\varepsilon_\pi$. By subadditivity of the spectral norm, $\|A \otimes B\|_2 = \|A\|_2\,\|B\|_2$ and $\|w_l w_l^\top\|_2 = \|w_l\|_2^2$ \citep{horn_matrix_2012}.  Thus,
  \begin{equation}
    \eta \;=\; \|K - K_\pi\|_2
    \;\leq\; \sum_{l=1}^{L} \|w_l w_l^\top\|_2 \, \|K_l - K_{\pi,l}\|_2
    \;\leq\; C\,\varepsilon_\pi \sum_{l=1}^{L} \|w_l\|_2^2\,\alpha_l \, .
  \end{equation}
  Since $K, K_\pi \succeq 0$, applying \cref{lem:posterior_perturbation} with this $\eta$ yields the stated bounds on $\|\mu - \mu_\pi\|_2$ and $\|\Sigma - \Sigma_\pi\|_2$.
\end{proof}

\section{Proof of Proposition 1} \label{app:prop1}
\begin{proof}
  Let $T(\pi) := \max_i s_{\pi(i)} = \sum_{i=2}^{C} \Delta_i$ denote the chain coordinate range from starting point $\pi(1)$. Since $s_{\pi(1)} = 0$ and the chain visits each point of $X$ exactly once, $T(\pi)$ equals the total length of the greedy nearest-neighbor Hamiltonian path on $X$ initiated at $\pi(1)$. Let $T^\star$ denote the length of the shortest Hamiltonian path on $X$ under the Euclidean metric.

  \citet{rosenkrantz_analysis_1977} prove that, for any metric instance on $C$ points, the greedy nearest-neighbor heuristic produces a Hamiltonian path of length at most
  \begin{equation}
    T(\pi) \;\le\; \frac{1}{2}\bigl(\lceil\log_2 C\rceil + 1\bigr)\,T^\star
  \end{equation}
  regardless of the starting point. Applying this upper bound to $\pi_1$ and the trivial lower bound $T^\star \le T(\pi_2)$ to $\pi_2$ gives the claimed inequality
  \begin{equation}
    T(\pi_1) \;\le\; \frac{1}{2}\bigl(\lceil\log_2 C\rceil + 1\bigr)\,T(\pi_2) \, .
  \end{equation}

  For any $i,j$, $\|x_i - x_j\| \le \mathrm{diam}(X) \le T(\pi)$, where the last inequality holds because any Hamiltonian path on $X$ is at least as long as the Euclidean diameter. Furthermore, $|s_{\pi(i)} - s_{\pi(j)}| \le T(\pi)$ and thus, $\varepsilon_\pi \le T(\pi)$. Hence the upper bound of \cref{thm:lmc_posterior_perturbation} inherits the same $\mathcal{O}(\log C)$ variation across choices of starting point.
\end{proof}

\end{document}